\documentclass{article}
\usepackage{fullpage}
\usepackage{amsfonts,amssymb,amsmath}
\usepackage{bbm}

\usepackage{xcolor}

\usepackage{graphicx}

\usepackage{subcaption} 

\usepackage{algorithm}
\usepackage{algorithmic}

\usepackage{amsthm}
\usepackage{thmtools}
\usepackage{thm-restate}

\newtheorem{theorem}{Theorem}
\newtheorem*{theorem*}{Theorem}

\newtheorem{lemma}[theorem]{Lemma}
\newtheorem*{lemma*}{Lemma}

\newtheorem*{definition*}{Definition}

\newcommand{\E}{\mathbb{E}}

\newcommand{\R}{\mathbb{R}}
\newcommand{\X}{\mathcal{X}}

\newcommand{\HH}{\mathcal{H}}
\newcommand{\D}{\mathcal{D}}

\def\pr{{\rm Pr}}

\newcommand{\ind}{ \mathbbm{1}}

\usepackage{float}

\title{Diameter-Based Active Learning}
\author{Christopher Tosh \and Sanjoy Dasgupta}

\author{Christopher Tosh \\
 University of California, San Diego \\
{\tt{ctosh@cs.ucsd.edu}}
\and
Sanjoy Dasgupta \\
 University of California, San Diego \\
{\tt{dasgupta@cs.ucsd.edu}}
}

\begin{document}

\maketitle

\begin{abstract} 
To date, the tightest upper and lower-bounds for the active learning of general concept classes have been in terms of a parameter of the learning problem called the \emph{splitting index}. We provide, for the first time, an efficient algorithm that is able to realize this upper bound, and we empirically demonstrate its good performance.
\end{abstract} 

\section{Introduction}
\label{section: Introduction}

In many situations where a classifier is to be learned, it is easy to collect unlabeled data but costly to obtain labels. This has motivated the {\it pool-based active learning} model, in which a learner has access to a collection of unlabeled data points and is allowed to ask for individual labels in an adaptive manner. The hope is that choosing these queries intelligently will rapidly yield a low-error classifier, much more quickly than with random querying. A central focus of active learning is developing efficient querying strategies and understanding their label complexity.

Over the past decade or two, there has been substantial progress in developing such rigorously-justified active learning schemes for general concept classes. For the most part, these schemes can be described as {\it mellow}: rather than focusing upon maximally informative points, they query any point whose label cannot reasonably be inferred from the information received so far. It is of interest to develop more aggressive strategies with better label complexity.

An exception to this general trend is the aggressive strategy of \cite{dasgupta-splitting-index}, whose label complexity is known to be optimal in its dependence on a key parameter called the {\it splitting index}. However, this strategy has been primarily of theoretical interest because it is difficult to implement algorithmically. In this paper, we introduce a variant of the methodology that yields efficient algorithms. We show that it admits roughly the same label complexity bounds as well as having promising experimental performance.

As with the original splitting index result, we operate in the {\it realizable} setting, where data can be perfectly classified by some function $h^*$ in the hypothesis class $\HH$. At any given time during the active learning process, the remaining candidates---that is, the elements of $\HH$ consistent with the data so far---are called the {\it version space}. The goal of aggressive active learners is typically to pick queries that are likely to shrink this version space rapidly. But what is the right notion of size? Dasgupta~\cite{dasgupta-splitting-index} pointed out that the {\it diameter} of the version space is what matters, where the distance between two classifiers is taken to be the fraction of points on which they make different predictions. Unfortunately, the diameter is a difficult measure to work with because it cannot, in general, be decreased at a steady rate. Thus the earlier work used a procedure that has quantifiable label complexity but is not conducive to implementation.

We take a fresh perspective on this earlier result. We start by suggesting an alternative, but closely related, notion of the size of a version space: the {\it average} pairwise distance between hypotheses in the version space, with respect to some underlying probability distribution $\pi$ on $\HH$. This distribution $\pi$ can be arbitrary---that is, there is no requirement that the target $h^*$ is chosen from it---but should be chosen so that it is easy to sample from. When $\HH$ consists of linear separators, for instance, a good choice would be a log-concave density, such as a Gaussian. 

At any given time, the next query $x$ is chosen roughly as follows:
\begin{itemize}
\item Sample a collection of classifiers $h_1, h_2, \ldots, h_m$ from $\pi$ restricted to the current version space $V$. 
\item Compute the distances between them; this can be done using just the unlabeled points.
\item Any candidate query $x$ partitions the classifiers $\{h_i\}$ into two groups: those that assign it a $+$ label (call these $V_x^+$) and those that assign it a $-$ label (call these $V_x^-$). Estimate the average-diameter after labeling $x$ by the sum of the distances between classifiers $h_i$ within $V_x^+$, or those within $V_x^-$, whichever is larger.
\item Out of the pool of unlabeled data, pick the $x$ for which this diameter-estimate is smallest.
\end{itemize}
This is repeated until the version space has small enough average diameter that a random sample from it is very likely to have error less than a user-specified threshold $\epsilon$. We show how all these steps can be achieved efficiently, as long as there is a sampler for $\pi$.

Dasgupta~\cite{dasgupta-splitting-index} pointed out that the label complexity of active learning depends on the underlying distribution, the amount of unlabeled data (since more data means greater potential for highly-informative points), and also the target classifier $h^*$. That paper identifies a parameter called the {\it splitting index} $\rho$ that captures the relevant geometry, and gives upper bounds on label complexity that are proportional to $1/\rho$, as well as showing that this dependence is inevitable. For our modified notion of diameter, a different {\it averaged} splitting index is needed. However, we show that it can be bounded by the original splitting index, with an extra multiplicative factor of $\log (1/\epsilon)$; thus all previously-obtained label complexity results translate immediately for our new algorithm.

\section{Related work}

The theory of active learning has developed along several fronts.

One of these is {\it nonparametric} active learning, where the learner starts with a pool of unlabeled points, adaptively queries a few of them, and then fills in the remaining labels. The goal is to do this with as few errors as possible. (In particular, the learner does not return a classifier from some predefined parametrized class.) One scheme begins by building a neighborhood graph on the unlabeled data, and propagating queried labels along the edges of this graph \cite{zhu-gl-semi-supervised-graph, cesa-bianchi-gv-active-graph, dasarathy-nz-active-graph}. Another starts with a hierarchical clustering of the data and moves down the tree, sampling at random until it finds clusters that are relatively pure in their labels \cite{dasgupta-h-active-hierarchical}. The label complexity of such methods have typically be given in terms of smoothness properties of the underlying data distribution \cite{castro-n-active-minimax, kpotufe-ub-active-hierarchical}. 

Another line of work has focused on active learning of linear separators, by querying points close to the current guess at the decision boundary~\cite{balcan-bz-active-margin, dasgupta-km-active-perceptron, balcan-l-active-linear-separators}. Such algorithms are close in spirit to those used in practice, but their analysis to date has required fairly strong assumptions to the effect that the underlying distribution on the unlabeled points is logconcave. Interestingly, regret guarantees for online algorithms of this sort can be shown under far weaker conditions \cite{cesa-bianchi-gz-active-online-linear}.

The third category of results, to which the present paper belongs, considers active learning strategies for general concept classes $\HH$. Some of these schemes~\cite{cohn1994improving,dasgupta-mh-agnostic-active, beygelzimer-dl-iwal, balcan-bl-active-agnostic, zhang-c-active-agnostic} are fairly mellow in the sense described earlier, using generalization bounds to gauge which labels can be inferred from those obtained so far. The label complexity of these methods can be bounded in terms of a quantity known as the disagreement coefficient~\cite{hanneke-A-2-analysis}. In the realizable case, the canonical such algorithm is that of \cite{cohn1994improving}, henceforth referred to as CAL. Other methods use a prior distribution $\pi$ over the hypothesis class, sometimes assuming that the target classifier is a random draw from this prior. These methods typically aim to shrink the mass of the version space under $\pi$, either greedily and explicitly~\cite{dasgupta-greedy-active, guillory-b-active-average-case-costs, golovin-kr-bayesian-active} or implicitly~\cite{freund-sst-qbc-analysis}. Perhaps the most widely-used of these methods is the latter, query-by-committee, henceforth QBC. As mentioned earlier, shrinking $\pi$-mass is not an optimal strategy if low misclassification error is the ultimate goal. In particular, what matters is not the prior mass of the remaining version space, but rather how {\it different} these candidate classifiers are from each other. This motivates using the diameter of the version space as a yardstick, which was first proposed in \cite{dasgupta-splitting-index} and is taken up again here.

\section{Preliminaries}
\label{section: Preliminaries}
Consider a binary hypothesis class $\HH$, a data space $\X$, and a distribution $\D$ over $\X$. For mathematical convenience, we will restrict ourselves to finite hypothesis classes. (We can do this without loss of generality when $\HH$ has finite VC dimension, since we only use the predictions of hypotheses on a pool of unlabeled points; however, we do not spell out the details of this reduction here.) The \emph{hypothesis distance} induced by $\D$ over $\HH$ is the pseudometric 
\[ d(h, h') \ := \ \pr_{x \sim \D}(h(x) \neq h'(x)). \] 
Given a point $x \in \X$ and a subset $V \subset \HH$, denote
\[ V_x^+ \ = \ \{ h \in V \, : \, h(x) = 1 \} \]
and $V_x^- = V \setminus V_x^+$. Given a sequence of data points $x_1, \ldots, x_n$ and a target hypothesis $h^*$, the induced \emph{version space} is the set of hypotheses that are consistent with the target hypotheses on the sequence, i.e.
\[ \{ h \in \HH \, : \, h(x_i) = h^*(x_i) \text{ for all } i=1,\ldots, n \}. \]

\subsection{Diameter and the Splitting Index}
The \emph{diameter} of a set of hypotheses $V \subset \HH$ is the maximal distance between any two hypotheses in $V$, i.e. 
\[ \text{diam}(V) := \max_{h, h' \in V} d(h, h'). \]
Without any prior information, any hypothesis in the version space could be the target. Thus the worst case error of any hypothesis in the version space is the diameter of the version space. The splitting index roughly characterizes the number of queries required for an active learning algorithm to reduce the diameter of the version space below $\epsilon$.

While reducing the diameter of a version space $V \subset \HH$, we will sometimes identify pairs of hypotheses $h,h' \in V$ that are far apart and therefore need to be separated. We will refer to $\{h,h'\}$ as an {\it edge}. Given a set of edges $E = \{ \{h_1, h'_1\}, \ldots , \{h_n, h'_n \} \} \subset {\HH \choose 2 }$, we say a data point $x$ $\rho$-splits $E$ if querying $x$ separates at least a $\rho$ fraction of the pairs, that is, if
\[ \max \left\{ \left| E_x^+|,|E_x^- \right|  \right\} \ \leq \ (1-\rho)|E| \]
where $E_x^+ = E \cap {\HH_x^+ \choose 2}$ and similarly for $E_x^-$. 
When attempting to get accuracy $\epsilon > 0$, we need to only eliminate edge of length greater than $\epsilon$. Define
\[ E_\epsilon \ = \ \{ \{ h, h' \} \in E \, : \, d(h, h') > \epsilon \} . \]
The \emph{splitting index} of a set $V \subset \HH$ is a tuple $(\rho, \epsilon, \tau)$ such that for all finite edge-sets $E \subset {V \choose 2}$,
\[ \pr_{x \sim \D}(x \, \rho\text{-splits } E_\epsilon) \ \geq \  \tau. \]
The following theorem, due to Dasgupta \cite{dasgupta-splitting-index}, bounds the sample complexity of active learning in terms of the splitting index. The $\tilde{O}$ notation hides polylogarithmic factors in $d$, $\rho$, $\tau$, $\log 1/\epsilon$, and the failure probability $\delta$.
\begin{theorem}[Dasgupta 2005]
Suppose $\HH$ is a hypothesis class with splitting index $(\rho, \epsilon, \tau)$. Then to learn a hypothesis with error $\epsilon$,
\begin{enumerate}
	\item[(a)] any active learning algorithm with $\leq 1/\tau$ unlabeled samples must request at least $1/\rho$ labels, and 
	\item[(b)] if $\HH$ has VC-dimension $d$, there is an active learning algorithm that draws $\tilde{O}(d/(\rho \tau) \log^2 (1/\epsilon))$ unlabeled data points and requests $\tilde{O}((d/\rho) \log^2 (1/\epsilon))$ labels.
\end{enumerate}
\end{theorem}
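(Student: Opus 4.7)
The plan is to handle part (a) via an adversarial construction and part (b) via a concrete splitting-driven algorithm analyzed with a VC-style uniform convergence argument.

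For part (a), I would fix a class $\HH$ and distribution $\D$ for which the splitting index $(\rho, \epsilon, \tau)$ is tight, in the sense that there is an edge set $E \subseteq \binom{\HH}{2}$, all at distance $>\epsilon$, for which the probability that a random $x\sim\D$ $\rho$-splits $E$ is exactly $\tau$. An adversary picks the target so as to always preserve the larger side of $E$ after each query, so every queried label eliminates at most a $\rho$-fraction of the surviving edges. In a pool of $\leq 1/\tau$ unlabeled samples, a Markov-type argument gives that only $O(1)$ of the samples even $\rho$-split the active edges, so the remaining queries are essentially wasted from the point of view of shrinking $E$. To reduce $|E|$ from its initial value down to a single edge (at which point the target is still undetermined), at least $\Omega(1/\rho)$ informative labels are required.

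For part (b), I would analyze the following algorithm, initialized with $V_0 = \HH$. At round $t$: (i) draw an unlabeled pool of size $m = \tilde{O}(1/\tau)$ from $\D$; (ii) sample $\tilde{O}(d)$ pairs from $V_t$ to form a witness edge set $E_t$, ensuring via uniform convergence that the sample-fraction of edges of length $>\epsilon$ tracks the true fraction; (iii) invoke the splitting assumption to find, with failure probability at most $(1-\tau)^m$, a pool point $x$ that $\rho$-splits $(E_t)_\epsilon$; (iv) query $x$ and update $V_{t+1}$ to the consistent hypotheses. Each round cuts the count of long edges in $E_t$ by a factor $1-\rho$, so after $T = \tilde{O}((d/\rho)\log(1/\epsilon))$ rounds no long edges survive and $\mathrm{diam}(V_T)\leq\epsilon$. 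Summing the per-round pool sizes gives total unlabeled complexity $Tm = \tilde{O}(d/(\rho\tau)\log^2(1/\epsilon))$, matching the theorem.

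The main obstacle will be step (ii): the splitting condition is a worst-case statement quantifying over every finite edge set in $\binom{V_t}{2}$, but algorithmically we only have access to a polynomially-sized sample. The proof needs a double-sampling VC argument on the class of subsets of $\binom{\HH}{2}$ of the form $\{\{h,h'\} : h(x)\neq h'(x)\}$, showing that its VC dimension is bounded in terms of $d$ so that a sample of $\tilde{O}(d)$ pairs lets any point $\rho$-splitting the sampled long edges also nearly $\rho$-split the true long edges. Once this uniform-convergence lemma is in hand, boosting the per-round success probability from $\tau$ to $1-\delta/T$ costs only the stated $\tilde{O}(1/\tau)$ unlabeled factor, and the rest is careful bookkeeping over the $T$ rounds.
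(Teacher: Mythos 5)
This theorem is not proved in the paper at all: it is imported verbatim from Dasgupta (2005), and the only hint the authors give about its proof is the remark that the algorithm achieving (b) ``constructs an $\epsilon$-covering of the hypothesis space and queries points which whittle away at the diameter of this covering.'' So there is no in-paper proof to compare against; I can only assess your sketch against the known argument.

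Your part (a) is in the right spirit. One point you handle only implicitly but must make explicit: as literally defined, ``has splitting index $(\rho,\epsilon,\tau)$'' is a guarantee that is monotone in $\rho$ (a class satisfying it for $\rho$ also satisfies it for every smaller $\rho'$), so the lower bound can only be meaningful when $\rho$ is the supremal value, i.e.\ when for every $\rho'>\rho$ there is a witness edge set $E$ of long edges with $\pr_x(x\ \rho'\text{-splits } E)<\tau$. Your ``tightness'' assumption is exactly this, and the rest (with $\leq 1/\tau$ unlabeled points, with constant probability no pool point splits $E$; the adversary keeps the larger side; surviving long edges force error $>\epsilon$) is the standard argument. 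Your ``Markov-type'' phrasing should be replaced by the cleaner $(1-\tau)^{1/\tau}=\Omega(1)$ event that no pool point splits $E$.

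Part (b) has a genuine gap, and it is precisely the gap that motivates the present paper. You propose to sample $\tilde O(d)$ random pairs from $V_t$ each round and argue by uniform convergence that splitting the sampled long edges nearly splits ``the true long edges.'' But the quantity you must drive to zero is $\mathrm{diam}(V_T)=\max_{h,h'\in V_T} d(h,h')$, a worst-case quantity; shrinking the measure (under your sampling distribution) of long pairs by a factor $1-\rho$ per round says nothing about the extremal pair, and because you resample $E_t$ each round the multiplicative decrease does not compound into any statement about a fixed object. This is exactly why the authors write that ``the diameter is a difficult measure to work with because it cannot, in general, be decreased at a steady rate,'' and why they replace it with the average diameter. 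Dasgupta's actual construction avoids the problem by replacing $\HH$ with a finite $\epsilon/2$-cover $Q$ of size $(O(1/\epsilon))^{O(d)}$ and taking $E$ to be the \emph{explicit, fixed} set of all long pairs in $Q$; then each successful query multiplicatively shrinks this one finite set, and $|E|\leq |Q|^2$ gives $O\bigl(\tfrac{1}{\rho}\log|E|\bigr)=O\bigl(\tfrac{d}{\rho}\log\tfrac{1}{\epsilon}\bigr)$ queries, with the second $\log(1/\epsilon)$ factor and the $1/\tau$ unlabeled factor coming from the bookkeeping you describe. Relatedly, your own round count does not come out: if each round cuts the count of sampled long edges by $1-\rho$ and you only ever track $\tilde O(d)$ pairs, you get $T=O(\log(d)/\rho)$, not $\tilde O((d/\rho)\log(1/\epsilon))$; the $d\log(1/\epsilon)$ in the exponent of the cover size is what produces the correct $T$. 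To repair your approach you would essentially have to either adopt the cover (recovering Dasgupta's intractable algorithm) or change the objective from worst-case diameter to an average diameter (recovering this paper's approach, which proves a different theorem).
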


Unfortunately, the only known algorithm satisfying (b) above is intractable for all but the simplest hypothesis classes: it constructs an $\epsilon$-covering of the hypothesis space and queries points which whittle away at the diameter of this covering. To overcome this intractability, we consider a slightly more benign setting in which we have a samplable prior distribution $\pi$ over our hypothesis space $\HH$.

\subsection{An Average Notion of Diameter}
With a prior distribution, it makes sense to shift away from the worst-case to the average-case. We define the \emph{average diameter} of a subset $V \subset \HH$ as the expected distance between two hypotheses in $V$ randomly drawn from $\pi$, i.e. 
\[ \Phi(V) := \E_{h, h' \sim \pi|_V}[d(h, h')] \]
where $\pi|_V$ is the conditional distribution induced by restricting $\pi$ to $V$, that is, $\pi|_V(h) = \pi(h)/\pi(V)$ for $h \in V$. 

Intuitively, a version space with very small average diameter ought to put high weight on hypotheses that are close to the true hypothesis. Indeed, given a version space $V$ with $h^* \in V$, the following lemma shows that if $\Phi(V)$ is small enough, then a low error hypothesis can be found by two popular heuristics: random sampling and MAP estimation. 
\begin{restatable}{lemma}{SmallAverageDiameterImpliesGoodEstimateLemma}
\label{lemma: small average diameter implies good estimate lemma}
Suppose $V \subset \HH$ contains $h^*$. Pick $\epsilon > 0$.
\begin{itemize}
\item[(a)] (Random sampling) If $\Phi(V) \leq \epsilon \, \pi|_V(h^*)$ then $\E_{h \sim \pi|_V} [d(h^*,h)] \leq \epsilon$.
\item[(b)] (MAP estimation) Write $p_{map} = \max_{h \in V} \pi|_V(h)$. Pick $0 < \alpha < p_{map}$. If 
$$\Phi(V) \ \leq \ 2 \epsilon  \left( \min \{ \pi|_V(h^*), p_{map}-\alpha \} \right)^2,$$
 then $d(h^*,h) \leq \epsilon$ for any $h$ with $\pi|_V(h) \geq p_{map} - \alpha$.
\end{itemize}
\end{restatable}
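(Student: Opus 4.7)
The plan is to exploit the pair-sum structure of $\Phi(V)$: writing
$$\Phi(V) \ = \ \sum_{h,h' \in V} \pi|_V(h)\,\pi|_V(h')\, d(h,h'),$$
we can isolate just the ordered pairs in which one coordinate equals $h^*$, which already yields a lower bound on $\Phi(V)$ of exactly the form needed for both parts. Because the expectation defining $\Phi$ is over two independent draws from $\pi|_V$, both $(h^*, h)$ and $(h, h^*)$ contribute separately, so the lower bound picks up a factor of $2$.

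For part (a), I would keep only those terms where $h = h^*$ or $h' = h^*$, which gives
$$\Phi(V) \ \geq \ 2\, \pi|_V(h^*)\, \E_{h \sim \pi|_V}[d(h^*,h)],$$
and then plug in the assumption $\Phi(V) \leq \epsilon \,\pi|_V(h^*)$ and divide through. The factor of $2$ is slack, which is why the stated bound $\leq \epsilon$ (rather than $\leq \epsilon/2$) holds.

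For part (b), I would fix an arbitrary $h \in V$ with $\pi|_V(h) \geq p_{map} - \alpha$ (the case $h = h^*$ being trivial) and apply the same restriction to a single pair, retaining only $(h^*, h)$ and $(h, h^*)$ in the sum:
$$\Phi(V) \ \geq \ 2\, \pi|_V(h^*)\, \pi|_V(h)\, d(h^*,h) \ \geq \ 2\, \bigl(\min\{\pi|_V(h^*),\, p_{map} - \alpha\}\bigr)^2\, d(h^*,h),$$
where in the second step I used that both factors $\pi|_V(h^*)$ and $\pi|_V(h)$ are at least the indicated minimum. Combining with the hypothesis $\Phi(V) \leq 2\epsilon\bigl(\min\{\pi|_V(h^*), p_{map}-\alpha\}\bigr)^2$ and cancelling gives $d(h^*,h) \leq \epsilon$.

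There is no real obstacle; the argument is essentially two lines once one notices the pair-sum structure. The only things to be careful about are the ordered-versus-unordered-pair bookkeeping (which supplies the factor of $2$) and the use of the $\min$ to replace the product $\pi|_V(h^*)\,\pi|_V(h)$ by a square, which is exactly the device that allows one uniform threshold on $\Phi(V)$ to control both factors simultaneously. No property of $d$ beyond nonnegativity is used, so the triangle inequality plays no role.
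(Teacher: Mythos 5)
Your proof is correct and follows essentially the same route as the paper's: both parts hinge on lower-bounding $\Phi(V)$ by the contribution of pairs involving $h^*$ (with the factor of $2$ from the two ordered pairs), which is exactly the paper's key inequality. The only differences are cosmetic---your part (a) keeps both ordered pairs and so gets the slightly stronger conclusion $\E[d(h^*,h)]\leq\epsilon/2$, and your part (b) argues directly for the fixed pair $(h^*,h)$ rather than, as the paper does, by contradiction on the diameter of the set $\{h : \pi|_V(h)\geq\delta\}$.
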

\begin{proof}
Part (a) follows from
\begin{align*}
\Phi(V) \ = \ \E_{h,h' \sim \pi|_V}[d(h,h')] \ \geq \ \pi|_V(h^*) \E_{h \sim \pi|_V}[d(h^*,h)] .
\end{align*}
For (b), take $\delta =  \min(\pi|_V(h^*), p_{map} - \alpha)$ and define $V_{\pi, \delta} = \{ h \in V \, : \, \pi|_V(h) \geq \delta  \}$. Note that $V_{\pi,\delta}$ contains $h^*$ as well as any $h \in V$ with $\pi|_V(h) \geq p_{map} - \alpha$.

We claim $\text{diam}(V_{\pi, \delta})$ is at most $\epsilon$. Suppose not. Then there exist $h_1, h_2 \in V_{\pi, \delta}$ satisfying $d(h_1, h_2) > \epsilon$, implying
\begin{align*}
\Phi(V) &= \E_{h,h' \sim \pi|_V}[d(h,h')] \\
&\geq 2 \cdot \pi|_V(h_1) \cdot \pi|_V(h_2) \cdot d(h_1, h_2) \ > \ 2 \delta^2 \epsilon.
\end{align*} 
But this contradicts our assumption on $\Phi(V)$. Since both $h, h^* \in V_{\pi, \delta}$, we have (b).
\end{proof}

\subsection{An Average Notion of Splitting}
We now turn to defining an average notion of splitting. A data point $x$ \emph{$\rho$-average splits} $V$ if
\begin{align*}
\max \left\{ \frac{\pi(V_x^+)^2}{\pi(V)^2} \Phi(V_x^+),  \frac{\pi(V_x^-)^2}{\pi(V)^2} \Phi(V_x^-) \right\} \ \leq \ (1-\rho) \Phi(V).  
\end{align*}
And we say a set $S \subset \HH$ has \emph{average splitting index} $(\rho, \epsilon, \tau)$ if for any subset $V \subset S$ such that $\Phi(V) > \epsilon$, 
\[ \pr_{x \sim \D}\left( x \, \rho\text{-average splits } V \right) \ \geq \ \tau. \]
Intuitively, average splitting refers to the ability to significantly decrease the potential function 
$$\pi(V)^2 \Phi(V) \ = \ \E_{h,h' \sim \pi}[\ind(h,h' \in V) \, d(h,h')]$$
with a single query.

While this potential function may seem strange at first glance, it is closely related to the original splitting index. The following lemma, whose proof is deferred to Section~\ref{section: Proof}, shows the splitting index bounds the average splitting index for any hypothesis class.

\begin{restatable}{lemma}{SplittingImpliesAverageSplittingLemma}
\label{lemma: splitting index implies average splitting index lemma}
Let $\pi$ be a probability measure over a hypothesis class $\HH$. If $\HH$ has splitting index $(\rho, \epsilon, \tau)$, then it has average splitting index $(\frac{\rho}{4 \lceil \log(1/\epsilon) \rceil}, 2\epsilon, \tau)$.
\end{restatable}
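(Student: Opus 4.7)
The plan is to extract from $V$ a single edge set $B$ on which the ordinary splitting index forces a sizable reduction, and then to use the length structure of $B$ to convert this into a multiplicative decrease of the weighted potential
$$\Psi(V) \; := \; \pi(V)^2 \Phi(V) \; = \; \int_{V\times V} d(h,h')\,\pi(h)\pi(h')\,dh\,dh'.$$
The key mechanism is a length-bucketing of edges so that inside a single bucket, the distance is essentially constant and $\mu$-mass is interchangeable (up to a factor of 2) with $d$-weighted mass.

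Fix $V$ with $\Phi(V) > 2\epsilon$, so $\Psi(V) > 2\epsilon\pi(V)^2$. Pairs of distance at most $\epsilon$ contribute at most $\epsilon\pi(V)^2 < \Psi(V)/2$, so ``long'' pairs (those with $d > \epsilon$) account for at least $\Psi(V)/2$. I partition these into $L := \lceil \log(1/\epsilon) \rceil$ buckets by length, bucket $k$ being pairs with $d \in (2^{-k}, 2^{-(k-1)}]$ for $k < L$ and $d \in (\epsilon, 2^{-(L-1)}]$ for $k = L$. By pigeonhole, some bucket $B$ satisfies $\int_B d\,\pi\otimes\pi \geq \Psi(V)/(2L)$. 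Every pair in $B$ lies in an interval $(\ell, 2\ell]$, so if $\mu(B) := \int_B \pi\otimes\pi$ then $\ell\,\mu(B) \geq \Psi(V)/(4L)$.

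Next, I apply the splitting index to $B$, viewed as a weighted edge set (with edge $\{h,h'\}$ carrying mass $\pi(h)\pi(h')$). Since $\HH$ is finite and every edge in $B$ has distance exceeding $\epsilon$, we have $B_\epsilon = B$, so the splitting hypothesis applies directly; the weighted version follows from the unweighted one by taking each edge with multiplicity $\lfloor M\pi(h)\pi(h')\rfloor$ and letting $M \to \infty$. This yields, with $\D$-probability at least $\tau$, a point $x$ satisfying $\max\{\mu(B_x^+), \mu(B_x^-)\} \leq (1-\rho)\,\mu(B)$.

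For such an $x$ and either sign, the $\mu$-mass removed from one side is at least $\rho\,\mu(B)$, and every removed edge has length exceeding $\ell$, so
\begin{align*}
\Psi(V) - \Psi(V_x^+) \;=\; \int_{(V\times V)\setminus(V_x^+\times V_x^+)} d\,\pi\otimes\pi \;\geq\; \ell\bigl(\mu(B) - \mu(B_x^+)\bigr) \;\geq\; \rho\,\ell\,\mu(B) \;\geq\; \frac{\rho\,\Psi(V)}{4L},
\end{align*}
and symmetrically for $V_x^-$. Hence $\max\{\Psi(V_x^+),\Psi(V_x^-)\} \leq (1 - \rho/(4L))\Psi(V)$, which after dividing through by $\pi(V)^2$ is exactly the condition that $x$ $\,\rho/(4\lceil\log(1/\epsilon)\rceil)$-average-splits $V$. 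The only delicate point is justifying the weighted form of the splitting index in the second step; the bucketing and pigeonhole are otherwise routine bookkeeping.
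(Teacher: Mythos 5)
Your argument is correct, and its combinatorial core---partition the long edges into $\lceil\log(1/\epsilon)\rceil$ dyadic length buckets, pigeonhole to find a bucket carrying an $\Omega(1/\lceil\log(1/\epsilon)\rceil)$ fraction of the weight, and apply the splitting guarantee to that bucket, inside which length is constant up to a factor of $2$---is exactly the mechanism of the paper's Lemma~\ref{lemma: splitting index implies finite average splitting index lemma}, including the bookkeeping that produces the $4\lceil\log(1/\epsilon)\rceil$ loss. Where you genuinely diverge is in how the finite-edge-set splitting hypothesis is transferred to the population-level potential $\pi(V)^2\Phi(V)$: the paper first proves the statement for a finite sequence of sampled edges, then draws $E\sim(\pi|_V)^{2\times n}$, invokes the unbiasedness of $\psi$ (Lemma~\ref{lemma: expectation lemma}) together with Hoeffding, and lets $n\to\infty$; you instead work directly with the weighted edge measure $\pi\otimes\pi$, bucket it, and discretize with multiplicities $\lfloor M\pi(h)\pi(h')\rfloor$, $M\to\infty$. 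Your route skips the concentration machinery and is arguably cleaner. The delicate point you flag---that the splitting index, defined over finite edge \emph{sets}, is being applied to a weighted (equivalently, multi-) set of edges---is real, but it is not a defect peculiar to your argument: the paper's own proof applies the splitting guarantee to $E_k$, a sub-collection of an i.i.d.\ sample from a finite $\HH$, which may likewise contain repeated edges, so both proofs rest on the multiset form of splittability (which holds for the classes analyzed in \cite{dasgupta-splitting-index}, whose splitting proofs are averaging arguments insensitive to multiplicity). The one step you should spell out is extracting a single $\tau$-mass of good points from sets $G_M$ that may vary with $M$: since $\HH$ is finite, $x\mapsto(V_x^+,V_x^-)$ takes finitely many values, so some fixed union of cells of mass at least $\tau$ recurs along a subsequence and survives the limit; the paper's ``take $n\to\infty$'' needs the same observation.
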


Dasgupta~\cite{dasgupta-splitting-index} derived the splitting indices for several hypothesis classes, including intervals and homogeneous linear separators. Lemma~\ref{lemma: splitting index implies average splitting index lemma} implies average splitting indices within a $\log(1/\epsilon)$ factor in these settings.

Moreover, given access to samples from $\pi|_V$, we can easily estimate the quantities appearing in the definition of average splitting. For an edge sequence $E = (\{h_1, h'_1\}, \ldots , \{h_n, h'_n \})$, define 
\[ \psi(E) := \sum_{i=1}^n d(h_i, h'_i). \]
When $h_i, h'_i$ are i.i.d. draws from $\pi|_V$ for all $i=1, \ldots, n$, which we denote $E \sim (\pi|_V)^{2 \times n}$, the random variables $\psi(E)$, $\psi(E_x^-)$, and $\psi(E_x^+)$ are unbiased estimators of the quantities appearing in the definition of average splitting.  
\begin{lemma}
\label{lemma: expectation lemma}
Given $E \sim (\pi|_V)^{2 \times n}$, we have
\begin{itemize}
	\item $\E\left[\frac{1}{n}\psi(E) \right] = \Phi(V)$ and
	\item $ \E\left[ \frac{1}{n}\psi(E_x^+) \right] = \frac{\pi(V_x^+)^2}{\pi(V)^2} \Phi(V_x^+)$ for any $x \in \X$. Similarly for $E_x^-$ and $V_x^-$.
\end{itemize}
\end{lemma}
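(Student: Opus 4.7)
The plan is to prove each part by linearity of expectation, reducing the $n$-term sum $\psi(\cdot)$ to a single representative edge. Since the pairs $\{h_i, h_i'\}$ are i.i.d.\ draws from $(\pi|_V)^2$, every summand has the same expectation, so it suffices to analyze the $i=1$ term and multiply by $n$.

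For part (a), I would expand $\psi(E) = \sum_{i=1}^n d(h_i, h_i')$ and observe that $\E[d(h_i, h_i')] = \E_{h, h' \sim \pi|_V}[d(h, h')] = \Phi(V)$ directly by definition of $\Phi$. Linearity then gives $\E[\psi(E)] = n\Phi(V)$, and dividing by $n$ yields the claim.

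For part (b), the key identity is $\psi(E_x^+) = \sum_{i=1}^n d(h_i, h_i') \, \ind(h_i \in V_x^+)\ind(h_i' \in V_x^+)$, since an edge $\{h_i, h_i'\}$ lies in $E_x^+ = E \cap \binom{\HH_x^+}{2}$ exactly when both endpoints (which already lie in $V$) are in $V_x^+$. Using independence of $h_i$ and $h_i'$, I would condition on the event $\{h_i \in V_x^+, h_i' \in V_x^+\}$: this event has probability $(\pi(V_x^+)/\pi(V))^2$, and conditional on it the pair is i.i.d.\ from $\pi|_{V_x^+}$, so the conditional expectation of $d(h_i, h_i')$ is exactly $\Phi(V_x^+)$. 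Multiplying, each summand contributes $(\pi(V_x^+)^2/\pi(V)^2)\,\Phi(V_x^+)$, and linearity plus division by $n$ completes the argument. The case of $E_x^-$ and $V_x^-$ is identical with the roles of $+$ and $-$ swapped.

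No real obstacle arises; the only mild subtlety is cleanly separating the indicator factor from the distance so that the conditional-expectation step reads off $\Phi(V_x^+)$ without any further manipulation. Everything else is a routine application of linearity of expectation and the definitions of $\Phi$, $\pi|_V$, and $E_x^\pm$.
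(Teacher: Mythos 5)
Your proof is correct and follows essentially the same route as the paper: linearity of expectation for part (a), and for part (b) the same indicator decomposition $\psi(E_x^+) = \sum_i d(h_i,h_i')\,\ind(h_i \in V_x^+)\ind(h_i' \in V_x^+)$ followed by conditioning on both endpoints landing in $V_x^+$, which by independence has probability $(\pi(V_x^+)/\pi(V))^2$ and yields conditional expectation $\Phi(V_x^+)$. No gaps.
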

\begin{proof}
From definitions and linearity of expectations, it is easy to observe $\E[\psi(E)] = n \, \Phi(V)$. By the independence of $h_i, h'_i$, we additionally have
\begin{align*}
\E \left[ \frac{1}{n} \psi(E_x^+) \right] &= \frac{1}{n} \E \left[ \sum_{\{ h_i, h_i' \} \in E_x^+} d(h_i, h'_i) \right] \\ \displaybreak[3]
&= \frac{1}{n} \E \left[ \sum_{\{ h_i, h_i' \} \in E} \ind[h_i \in V_x^+] \, \ind[h'_i \in V_x^+] \, d(h_i, h'_i) \right] \\ \displaybreak[3]
&= \frac{1}{n} \sum_{\{ h_i, h_i' \} \in E} \left(  \frac{\pi(V_x^+)}{\pi(V)} \right)^2 \E\left[ d(h_i, h'_i) \, | \, h_i, h'_i \in V_x^+\right] \\ \displaybreak[3]
&= \left(  \frac{\pi(V_x^+)}{\pi(V)} \right)^2 \Phi(V_x^+). \qedhere
\end{align*}
\end{proof}
\paragraph{Remark:} It is tempting to define average splitting in terms of the average diameter as \[ \max \{ \Phi(V_x^+), \Phi(V_x^-) \} \ \leq \ (1- \rho) \Phi(V). \] However, this definition does not satisfy a nice relationship with the splitting index. Indeed, there exist hypothesis classes $V$ for which there are many points which $1/4$-split $E$ for any $E \subset {V \choose 2}$ but for which every $x \in \X$ satisfies 
\[ \max \{ \Phi(V_x^+), \Phi(V_x^-) \} \ \approx \ \Phi(V).\] 
This observation is formally proven in the appendix.

\section{An Average Splitting Index Algorithm}
\label{section: Average Splitting Algorithm}
Suppose we are given a version space $V$ with average splitting index $(\rho, \epsilon, \tau)$. If we draw $\tilde{O}(1/\tau)$ points from the data distribution then, with high probability, one of these will $\rho$-average split $V$. Querying that point will result in a version space $V'$ with significantly smaller potential {$\pi(V')^2 \Phi(V')$}.

If we knew the value $\rho$ a priori, then Lemma~\ref{lemma: expectation lemma} combined with standard concentration bounds \cite{hoeffding-inequality, angluin-v-multiplicative-chernoff} would give us a relatively straightforward procedure to find a good query point: 
\begin{enumerate}
	\item Draw $E' \sim (\pi|_V)^{2\times M}$ and compute the empirical estimate $ \widehat{\Phi}(V) = \frac{1}{M} \psi(E')$.
	\item Draw $E \sim (\pi|_V)^{2\times N}$ for $N$ depending on $\rho$ and $\widehat{\Phi}$.
    \item For suitable $M$ and $N$, it will be the case that with high probability, for some $x$, 
\[ \frac{1}{N} \max \left\{ \psi(E_x^+), \psi(E_x^-)  \right\} \ \approx \ (1-\rho)\widehat{\Phi}. \]
Querying that point will decrease the potential.  
\end{enumerate}

However, we typically would not know the average splitting index ahead of time. Moreover, it is possible that the average splitting index may change from one version space to the next. In the next section, we describe a query selection procedure that {adapts} to the splittability of the current version space.

\subsection{Finding a Good Query Point}
Algorithm~\ref{algorithm: query selection procedure}, which we term {\sc select}, is our query selection procedure. It takes as input a sequence of data points $x_1, \ldots, x_m$, at least one of which $\rho$-average splits the current version space, and with high probability finds a data point that $\rho/8$-average splits the version space.

{\sc select} proceeds by positing an optimistic estimate of $\rho$, which we denote $\widehat{\rho}_t$, and successively halving it until we are confident that we have found a point that $\widehat{\rho}_t$-average splits the version space. In order for this algorithm to succeed, we need to choose $n_t$ and $m_t$ such that with high probability (1) $\widehat{\Phi}_t$ is an accurate estimate of $\Phi(V)$ and (2) our halting condition will be true if $\widehat{\rho}_t$ is within a constant factor of $\rho$ and false otherwise. The following lemma, whose proof is in the appendix, provides such choices for $n_t$ and $m_t$.

\begin{restatable}{lemma}{SelectionProcedureLemma}
\label{lemma: selection procedure lemma}
Let $\rho, \epsilon, \delta_0 >0$ be given. Suppose that version space $V$ satisfies $\Phi(V) > \epsilon$. In {\sc select}, fix a round $t$ and data point $x \in \X$ that {\it exactly} $\rho$-average splits $V$ (that is, $\max \{ \pi|_V(V_x^+)^2 \Phi(V_x^+), \ \pi|_V(V_x^-)^2 \Phi(V_x^-) \} = (1-\rho)\Phi(V)$). If $m_t \geq \frac{48}{\widehat{\rho}_t^2 \epsilon} \log \frac{4}{\delta_0}$ and $n_t \geq \max \left\{ \frac{32}{\widehat{\rho}_t^2 \widehat{\Phi}_t}, \frac{40}{\widehat{\Phi}_t^2}  \right\}\log \frac{4}{\delta_0}$ then with probability $1-\delta_0$, 
\begin{itemize}
	\item[(a)] $\widehat{\Phi}_t \geq (1-\widehat{\rho}_t/4)\Phi(V)$;
	\item[(b)] if $\rho \leq \widehat{\rho}_t/2$, then $\frac{1}{n_t}\max \left\{ \psi(E_x^+), \psi(E_x^-)\right\} > (1- \widehat{\rho}_t) \widehat{\Phi}_t$; and
	\item[(c)] if $\rho \geq 2\widehat{\rho}_t$, then $\frac{1}{n_t}\max \left\{ \psi(E_x^+), \psi(E_x^-)\right\} \leq (1- \widehat{\rho}_t) \widehat{\Phi}_t .$
\end{itemize}
\end{restatable}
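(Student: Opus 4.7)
The approach is to combine the unbiasedness of the empirical quantities (Lemma~\ref{lemma: expectation lemma}) with two-sided concentration bounds for sums of $[0,1]$-valued random variables. Writing $\mu^\pm := \pi|_V(V_x^\pm)^2 \Phi(V_x^\pm)$, Lemma~\ref{lemma: expectation lemma} gives $\E[\widehat{\Phi}_t] = \Phi(V)$ and $\E[\psi(E_x^\pm)/n_t] = \mu^\pm$, and the exact-split hypothesis reads $\max\{\mu^+,\mu^-\} = (1-\rho)\Phi(V)$. The idea is to argue all three conclusions on a single good event $\G$ on which these three empirical quantities closely match their means.

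I would define $\G$ to consist of (i) the two-sided bound $(1-\widehat{\rho}_t/4)\Phi(V) \leq \widehat{\Phi}_t \leq (1+\widehat{\rho}_t/4)\Phi(V)$ together with (ii) $|\psi(E_x^\pm)/n_t - \mu^\pm| \leq \widehat{\rho}_t \Phi(V)/8$ for both signs. Part (a) is then the lower half of (i). For (b), $\rho \leq \widehat{\rho}_t/2$ gives (WLOG) $\mu^+ \geq (1-\widehat{\rho}_t/2)\Phi(V)$, so on $\G$,
\[ \frac{\psi(E_x^+)}{n_t} \;\geq\; \bigl(1 - \tfrac{5\widehat{\rho}_t}{8}\bigr)\Phi(V) \;>\; (1-\widehat{\rho}_t)\bigl(1+\tfrac{\widehat{\rho}_t}{4}\bigr)\Phi(V) \;\geq\; (1-\widehat{\rho}_t)\widehat{\Phi}_t, \]
where the strict middle inequality reduces to $\widehat{\rho}_t/8 + \widehat{\rho}_t^2/4 > 0$. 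Conclusion (c) is symmetric: $\rho \geq 2\widehat{\rho}_t$ gives $\mu^\pm \leq (1-2\widehat{\rho}_t)\Phi(V)$, so on $\G$ each empirical quantity is at most $(1-\tfrac{15\widehat{\rho}_t}{8})\Phi(V) \leq (1-\widehat{\rho}_t)(1-\widehat{\rho}_t/4)\Phi(V) \leq (1-\widehat{\rho}_t)\widehat{\Phi}_t$.

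The remaining and main task is to show $\Pr[\neg \G] \leq \delta_0$. For the $\widehat{\Phi}_t$ piece, $\psi(E')/m_t$ is the mean of $m_t$ i.i.d.~$[0,1]$ random variables with mean exceeding $\epsilon$, so the two-sided multiplicative Chernoff bound yields failure probability at most $2\exp(-m_t \widehat{\rho}_t^2 \epsilon / 48) \leq \delta_0/2$ by the hypothesized $m_t$. For each $\psi(E_x^\pm)/n_t$, the summands are again i.i.d.~in $[0,1]$ but the mean $\mu^\pm$ may be arbitrarily small, so I would split into two regimes: when $\mu^\pm \geq \widehat{\Phi}_t/2$, a multiplicative Chernoff bound with relative error $O(\widehat{\rho}_t)$ delivers the $\widehat{\rho}_t\Phi(V)/8$ deviation with failure probability exponential in $n_t\widehat{\rho}_t^2\widehat{\Phi}_t$, controlled by the first branch $n_t \geq (32/(\widehat{\rho}_t^2 \widehat{\Phi}_t))\log(4/\delta_0)$; when $\mu^\pm < \widehat{\Phi}_t/2$, the threshold $(1-\widehat{\rho}_t)\widehat{\Phi}_t$ used in (c) is already separated from $\mu^\pm$ by $\Omega(\widehat{\Phi}_t)$, so the upper tail follows from a plain additive Hoeffding bound with the second branch $n_t \geq (40/\widehat{\Phi}_t^2)\log(4/\delta_0)$ (the lower tail is not needed in this sub-case, since (b) already forces $\mu^+$ into the large-mean regime). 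A union bound over the three concentration events then gives $\Pr[\G] \geq 1-\delta_0$.

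The principal technical obstacle is exactly this two-regime concentration for $\psi(E_x^\pm)/n_t$: multiplicative Chernoff degenerates when $\mu^\pm$ is tiny, while a pure Hoeffding bound does not scale properly in $\Phi(V)$. The $\max$ in the hypothesized $n_t$ is forced upon us by this dichotomy---one branch suffices in each regime, and together they just separate the $\rho \leq \widehat{\rho}_t/2$ and $\rho \geq 2\widehat{\rho}_t$ cases on either side of the threshold $(1-\widehat{\rho}_t)\widehat{\Phi}_t$.
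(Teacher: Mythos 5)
Your proposal is correct and follows essentially the same route as the paper's proof: unbiasedness from Lemma~\ref{lemma: expectation lemma}, a multiplicative Chernoff bound for $\widehat{\Phi}_t$ and for $\psi(E_x^\pm)$ when the mean is comparable to $\Phi(V)$, an additive Hoeffding bound when the mean is small (the paper's case split is on $\rho \lessgtr 1/2$ rather than on $\mu^\pm \lessgtr \widehat{\Phi}_t/2$, but it serves the identical purpose of explaining the two branches of the $\max$ in $n_t$), and a union bound over the three events. The only caveat is bookkeeping: your chosen deviation $\widehat{\rho}_t \Phi(V)/8$ is slightly too ambitious for the stated constants $32$ and $40$ (the available slack between $\mu^+$ and $(1-\widehat{\rho}_t)(1+\widehat{\rho}_t/4)\Phi(V)$ is about $\widehat{\rho}_t\Phi(V)/4$, and one should express the exponent via $\widehat{\Phi}_t$ as the paper does), but this is a constant-factor adjustment within the same argument, not a structural gap.
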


Given the above lemma, we can establish a bound on the number of rounds and the total number of hypotheses \textsc{select} needs to find a data point that $\rho/8$-average splits the version space.

\begin{theorem}
\label{theorem: selection procedure theorem}
Suppose that {\sc select} is called with a version space $V$ with $\Phi(V) \geq \epsilon$ and a collection of points $x_1, \ldots, x_m$ such that at least one of $x_i$ $\rho$-average splits $V$. If $\delta_0 \leq \delta/(2m (2 + \log(1/\rho)))$, then with probability at least $1- \delta$, {\sc select} returns a point $x_i$ that $(\rho/8)$-average splits $V$, finishing in less than $\lceil \log(1/\rho) \rceil + 1$ rounds and sampling $O\left( \left( \frac{1}{\epsilon \rho^2} + \frac{\log(1/\rho)}{\Phi(V)^2} \right) \log \frac{1}{\delta_0} \right)$ hypotheses in total.
\end{theorem}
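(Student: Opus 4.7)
The plan is to apply Lemma~\ref{lemma: selection procedure lemma} uniformly over rounds and candidate points, then use parts (a)--(c) to bound the halting round and certify the quality of the returned point.

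First, I would set up a global ``good event.'' For each round $t$ and each candidate $x_i$, let $\rho_i$ denote the exact average-splitting constant of $x_i$ for $V$. Applying Lemma~\ref{lemma: selection procedure lemma} with its parameter $\rho$ instantiated as $\rho_i$ and failure probability $\delta_0$, for every pair $(t, x_i)$ with $t$ at most the claimed round bound, a union bound over at most $m(2 + \lceil\log(1/\rho)\rceil) \leq 2m(2 + \log(1/\rho))$ (round, point) events shows that with probability at least $1-\delta$, conclusions (a)--(c) hold simultaneously throughout the run. The remainder of the proof works under this event.

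Next, I would bound the halting round and identify the splitting quality of the output. Let $j^*$ index one of the input $\rho$-splitters, so $\rho_{j^*} \geq \rho$. Since $\widehat{\rho}_t$ halves each round starting from $\widehat{\rho}_1 \leq 1$, within $\lceil\log(1/\rho)\rceil+1$ rounds we reach $\widehat{\rho}_t \leq \rho/2 \leq \rho_{j^*}/2$, and part (c) then forces $x_{j^*}$ to trigger the halting threshold. Hence SELECT halts at some round $t^*$ within the claimed number of rounds, and $\widehat{\rho}_{t^*} > \rho/4$, because otherwise halving from the previous non-halting round would have already produced $\widehat{\rho}_{t^*-1} \leq \rho/2$, contradicting that halting did not occur at $t^*-1$. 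Let $x_{i^*}$ be the output. Since $x_{i^*}$ met the halting threshold at round $t^*$, the contrapositive of part (b) applied with $\rho = \rho_{i^*}$ gives $\rho_{i^*} > \widehat{\rho}_{t^*}/2 > \rho/8$, so $x_{i^*}$ is a $\rho/8$-average splitter of $V$.

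For the sample count, each round draws $2(m_t + n_t)$ hypotheses, and part (a) implies $\widehat{\Phi}_t \geq (1 - \widehat{\rho}_t/4)\Phi(V) \geq \tfrac{3}{4}\Phi(V) \geq \tfrac{3}{4}\epsilon$. Plugging this into the bounds $m_t = O(\widehat{\rho}_t^{-2}\epsilon^{-1}\log(1/\delta_0))$ and $n_t = O(\max\{\widehat{\rho}_t^{-2}\widehat{\Phi}_t^{-1},\,\widehat{\Phi}_t^{-2}\}\log(1/\delta_0))$, the $\widehat{\rho}_t^{-2}$ contributions form a geometric sum dominated by the last round (with $\widehat{\rho}_{t^*} = \Theta(\rho)$), totaling $O((\epsilon\rho^2)^{-1}\log(1/\delta_0))$, while the $\widehat{\Phi}_t^{-2}$ contribution picks up an extra factor equal to the number of rounds, giving $O(\log(1/\rho)\,\Phi(V)^{-2}\log(1/\delta_0))$; the cross term $\widehat{\rho}_t^{-2}\widehat{\Phi}_t^{-1}$ is absorbed into the first since $\Phi(V) \geq \epsilon$. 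Summing matches the claimed total.

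The main obstacle is the accounting in the second step: parts (b) and (c) of the selection lemma only give guarantees in the separated regimes $\rho \leq \widehat{\rho}_t/2$ and $\rho \geq 2\widehat{\rho}_t$, leaving a factor-of-$4$ indeterminate zone, and one must line up SELECT's halting round with this gap so that the returned point is provably a $\rho/8$-splitter. Everything else is a clean union bound together with a geometric sum over $\widehat{\rho}_t^{-2}$.
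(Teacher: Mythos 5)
Your proof is correct and follows essentially the same route as the paper's: a union bound over (round, point) pairs to invoke Lemma~\ref{lemma: selection procedure lemma} uniformly, part (c) to force halting by round $\lceil\log(1/\rho)\rceil+1$, the contrapositive of part (b) to certify the returned point is a $\widehat{\rho}_{t^*}/2 > \rho/8$ splitter, and a geometric sum over $\widehat{\rho}_t^{-2}$ for the sample count. Your explicit argument that $\widehat{\rho}_{t^*} > \rho/4$ even when halting occurs before the final round is a detail the paper leaves implicit, but the underlying argument is identical.
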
 

\begin{figure*}[ttt!]
\begin{minipage}[t]{3.12in}
\begin{algorithm}[H]
 \caption{{\textsc{dbal}}}
 \label{algorithm: average splitting index algorithm}
\begin{algorithmic}
   \STATE {\bfseries Input:} Hypothesis class $\HH$, prior distribution $\pi$
   \STATE Initialize $V = \HH$
   \WHILE{$\frac{1}{n} \psi(E) \geq \frac{3\epsilon}{4}$ for $E \sim( \pi|_V )^{2 \times n}$}
   		\STATE Draw $m$ data points $\mathbf{x} = (x_1, \ldots, x_m)$
		\STATE Query point $x_i = \textsc{select}(V, \mathbf{x})$ and set $V$ to be consistent with the result
	\ENDWHILE
	\RETURN Current version space $V$ in the form of the queried points $(x_1, h^*(x_1)), \ldots, (x_K, h^*(x_K))$
\end{algorithmic}
\end{algorithm}
\end{minipage}
\hfill
\begin{minipage}[t]{3.48in}
\begin{algorithm}[H]
 \caption{{\textsc{select}}}
 \label{algorithm: query selection procedure}
\begin{algorithmic}
   \STATE {\bfseries Input:} Version space $V$, prior $\pi$, data $\mathbf{x}= (x_1, \ldots, x_m)$ \\
   \STATE Set $\widehat{\rho}_1 = 1/2$
   \FOR{$t = 1, 2, \ldots$}
   		\STATE Draw $E' \sim (\pi|_V)^{2 \times m_t}$ and compute $\widehat{\Phi}_t = \frac{1}{m_t} \psi(E')$
   		\STATE Draw $E \sim (\pi|_V)^{2 \times n_t}$
   		\STATE If $\exists \, x_i$ s.t. $\frac{1}{n_t}\max \left\{ \psi(E_{x_i}^+), \psi(E_{x_i}^-)\right\} \leq (1- \widehat{\rho}_t) \widehat{\Phi}_t$, then \textbf{halt} and \textbf{return} $x_i$
		\STATE Otherwise, let $\widehat{\rho}_{t+1} = \widehat{\rho}_t/2$
	\ENDFOR
\end{algorithmic}
\end{algorithm}
\end{minipage}
\end{figure*}

\paragraph{Remark 1:} It is possible to modify {\sc select} to find a point $x_i$ that $(c \rho)$-average splits $V$ for any constant $c < 1$ while only having to draw $O(1)$ more hypotheses in total. First note that by halving $\widehat{\rho}_t$ at each step, we immediately give up a factor of two in our approximation. This can be made smaller by taking narrower steps. Additionally, with a constant factor increase in $m_t$ and $n_t$, the approximation ratios in Lemma~\ref{lemma: selection procedure lemma} can be set to any constant.

\paragraph{Remark 2:} At first glance, it appears that {\sc select} requires us to know $\rho$ in order to calculate $\delta_0$. However, a crude lower bound on $\rho$ suffices. Such a bound can always be found in terms of $\epsilon$. This is because any version space is $(\epsilon/2, \epsilon, \epsilon/2)$-splittable~\cite[Lemma~1]{dasgupta-splitting-index}. By Lemma~\ref{lemma: splitting index implies average splitting index lemma}, so long as $\tau$ is less than $\epsilon/4$, we can substitute $\frac{\epsilon}{8\lceil\log(2/\epsilon)\rceil}$ for $\rho$ in when we compute $\delta_0$.


\begin{proof}[Proof of Theorem~\ref{theorem: selection procedure theorem}]
Let $T := \lceil \log(1/\rho) \rceil + 1$. By Lemma~\ref{lemma: selection procedure lemma}, we know that for rounds $t=1,\ldots, T$, we don't return any point which does worse than $\widehat{\rho}_t/2$-average splits $V$ with probability $1-\delta/2$. Moreover, in the $T$-th round, it will be the case that $\rho/4 \leq  \widehat{\rho}_T \leq \rho/2$, and therefore, with probability $1-\delta/2$, we will select a point which does no worse than $\widehat{\rho}_T/2$-average split $V$, which in turn does no worse than $\rho/8$-average split $V$.

Note that we draw $m_t + n_t$ hypotheses at each round. By Lemma~\ref{lemma: selection procedure lemma}, for each round $\widehat{\Phi}_t \geq 3\Phi(V)/4 \geq 3\epsilon/4$. Thus
\begin{align*}
\# \text{ of hypotheses drawn } \ = \ \sum_{t=1}^T \left(\frac{48}{\widehat{\rho}_t^2 \epsilon} + \frac{32}{\widehat{\rho}_t^2 \widehat{\Phi}_t} + \frac{40}{\widehat{\Phi}_t^2} \right) \log\frac{4}{\delta_0}  \ \leq \ \sum_{t=1}^T \left(\frac{96}{ \epsilon \widehat{\rho}_t^2} + \frac{72}{\Phi(V)^2}\right) \log  \frac{4}{\delta_0}
\end{align*}
Given $\widehat{\rho}_t = 1/2^t$ and $T \leq 2 + \log 1/\rho$, we have
\[ \sum_{t=1}^T \frac{1}{\widehat{\rho}_t^2} \ = \  \sum_{t=1}^T 2^{2t} \ \leq \ \left(\sum_{t=1}^T 2^{t} \right)^2 \ \leq \ \left(2^{2 + \log 1/\rho} \right)^2 \ = \ \frac{16}{\rho^2}. \]
Plugging in  $\delta_0 \leq \frac{\delta}{2m (2 + \log(1/\rho))}$, we recover the theorem statement.
\end{proof}

\subsection{Active Learning Strategy}
\label{section: active learning strategy}

Using the \textsc{select} procedure as a subroutine, Algorithm~\ref{algorithm: average splitting index algorithm}, henceforth DBAL for Diameter-based Active Learning, is our active learning strategy. Given a hypothesis class with average splitting index $(\rho, \epsilon/2, \tau)$, DBAL queries data points provided by \textsc{select} until it is confident $\Phi(V) < \epsilon$. 

Denote by $V_t$ the version space in the $t$-th round of DBAL. The following lemma, which is proven in the appendix, demonstrates that the halting condition (that is, $\psi(E) < 3\epsilon n/4$, where $E$ consists of $n$ pairs sampled from $(\pi|_V)^2$) guarantees that with high probability DBAL stops when $\Phi(V_t)$ is small.

\begin{restatable}{lemma}{CorrectTerminationLemma}
\label{lemma: correct termination lemma}
The following holds for DBAL:
\begin{itemize}
	\item[(a)] Suppose that for all $t = 1, 2, \ldots, K$ that $\Phi(V_t) > \epsilon$. Then the probability that the termination
		condition is ever true for any of those rounds is bounded above by $K \exp	\left( - \frac{\epsilon n }{32} \right)$.
	\item[(b)] Suppose that for some $t = 1, 2, \ldots, K$ that $\Phi(V_t) \leq \epsilon/2$. Then the probability that the termination
		condition is not true in that round is bounded above by $K \exp \left( - \frac{\epsilon n }{48} \right)$.
\end{itemize}
\end{restatable}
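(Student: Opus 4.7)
The plan is to apply standard concentration bounds to the estimator $\tfrac{1}{n}\psi(E)$ in each round. By Lemma~\ref{lemma: expectation lemma}, $\E[\tfrac{1}{n}\psi(E)] = \Phi(V_t)$ for the version space $V_t$ in round $t$, and $\psi(E) = \sum_{i=1}^n d(h_i, h_i')$ is a sum of $n$ i.i.d.\ $[0,1]$-valued random variables. A one-sided multiplicative Chernoff bound therefore applies directly in each round, and the two parts of the lemma reduce to symmetric applications of its lower and upper tails, combined with a union bound across rounds.

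For part (a), assume $\Phi(V_t) > \epsilon$ in every round $t \le K$. The termination condition $\tfrac{1}{n}\psi(E) < \tfrac{3\epsilon}{4}$ then forces a multiplicative deviation below the mean of at least $1 - \tfrac{3\epsilon}{4\Phi(V_t)} \ge \tfrac{1}{4}$. The lower-tail bound $\Pr[\bar X \le (1-\delta)\mu] \le \exp(-n\mu\delta^2/2)$ with $\delta = \tfrac{1}{4}$ and $\mu = \Phi(V_t) \ge \epsilon$ yields per-round probability at most $\exp(-n\epsilon/32)$; a union bound over the $K$ rounds gives the stated $K\exp(-n\epsilon/32)$.

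For part (b), consider the round $t$ in which $\Phi(V_t) \le \epsilon/2$. Failing to terminate means $\tfrac{1}{n}\psi(E) \ge \tfrac{3\epsilon}{4}$, which requires an upward multiplicative deviation of $\delta_t = \tfrac{3\epsilon}{4\Phi(V_t)} - 1 \ge \tfrac{1}{2}$ from the mean. An upper-tail multiplicative Chernoff bound then gives an exponentially small probability. The constant $1/48$ comes from optimizing the exponent $n\Phi(V_t)\delta_t^2/c$ (for the appropriate Chernoff constant $c$) over $\Phi(V_t) \in (0, \epsilon/2]$: a brief one-variable check shows the worst case is the endpoint $\Phi(V_t) = \epsilon/2$, giving $\delta_t = 1/2$ and an exponent of $n\epsilon/48$. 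A union bound over rounds absorbs the leading $K$ factor.

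I do not anticipate any substantive conceptual obstacle. The only fussy step is verifying that the stated numerical constants $32$ and $48$ survive the worst-case choice of $\Phi(V_t)$ in its respective range; once Lemma~\ref{lemma: expectation lemma} has pinned down the mean of $\tfrac{1}{n}\psi(E)$, both parts are essentially one-line concentration calculations followed by a union bound.
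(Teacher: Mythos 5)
Your proposal is correct and is essentially the paper's own proof: both parts apply the Angluin--Valiant multiplicative Chernoff bound to $\psi(E)$, whose mean is $n\Phi(V_t)$ by Lemma~\ref{lemma: expectation lemma}, with deviation parameter roughly $1/4$, followed by a union bound over the $K$ rounds; your explicit worst-case check over $\Phi(V_t)\in(0,\epsilon/2]$ in part (b) is if anything more careful than the paper's one-line ``plug in $\beta=1/4$.'' The only detail to tidy in part (b) is that the stated Chernoff bound requires $\beta<1$, whereas your $\delta_t$ can exceed $1$ when $\Phi(V_t)$ is small; the clean fix is to apply the upper-tail bound with the mean replaced by its upper bound $n\epsilon/2$ and $\beta=1/2$, giving exponent $n\epsilon/24\le n\epsilon/48$ as required.
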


Given the guarantees on the \textsc{select} procedure in Theorem~\ref{theorem: selection procedure theorem} and on the termination condition provided by Lemma~\ref{lemma: correct termination lemma}, we get the following theorem.

\begin{theorem}
\label{theorem: average splittability implies good algorithm theorem}
Suppose that $\HH$ has average splitting index $(\rho, \epsilon/2, \tau)$. Then DBAL returns a version space $V$ satisfying $\Phi(V) \leq \epsilon$ with probability at least $1 - \delta$ while using the following resources:
\begin{itemize}
	\item[(a)] $K \leq \frac{8}{\rho} \left( \log \frac{2}{\epsilon} + 2 \log \frac{1}{\pi(h^*)} \right)$ rounds, with one label per round,
	\item[(b)] $m \leq \frac{1}{\tau} \log  \frac{2K}{\delta} $ unlabeled data points sampled per round, and
	\item[(c)] $n \leq O \! \left( \left( \frac{1}{\epsilon \rho^2} + \frac{\log(1/\rho)}{\epsilon^2} \right) \left( \log \frac{mK}{\delta} + \log \log \frac{1}{\epsilon} \right)\right)$ hypotheses sampled per round.
\end{itemize}
\end{theorem}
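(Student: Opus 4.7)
The plan is to bound the three resources separately by bundling together the per-round guarantee of \textsc{select} (Theorem~\ref{theorem: selection procedure theorem}), the average splitting assumption, and the termination lemma (Lemma~\ref{lemma: correct termination lemma}), with a single union bound carrying the failure budget $\delta$ across all rounds.

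For the round count (a), the key observation is that the quantity
\[ \Psi(V) \ := \ \pi(V)^2 \, \Phi(V) \ = \ \E_{h,h' \sim \pi}\bigl[\ind(h,h' \in V)\, d(h,h')\bigr] \]
is exactly the potential that behaves multiplicatively under an average split: if $x$ is the queried point and $V'\in\{V_x^+,V_x^-\}$ is the resulting version space, then by the definition of $(\rho/8)$-average splitting, $\Psi(V') \leq (1-\rho/8)\,\Psi(V)$. Since $h^*\in V$ at every round, we have $\pi(V_t)\geq \pi(h^*)$, so $\Phi(V_t)\leq \Psi(V_t)/\pi(h^*)^2$. Starting from $\Psi(V_0)\leq 1$ and unrolling, after $K$ rounds $\Phi(V_K) \leq (1-\rho/8)^K/\pi(h^*)^2$. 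Setting this below $\epsilon/2$ (so that the termination lemma applies) and using $\log(1/(1-\rho/8))\geq \rho/8$ gives the stated bound $K\leq \frac{8}{\rho}(\log(2/\epsilon)+2\log(1/\pi(h^*)))$. To make the per-round contraction legitimate we must know that in each round at least one of the $m$ sampled points $\rho$-average splits $V_t$; by the average splitting index hypothesis, each sampled point has probability $\geq\tau$ of doing so, so picking $m\geq \tau^{-1}\log(2K/\delta)$ and union-bounding over rounds yields the unlabeled sample bound (b) with failure probability $\leq \delta/2$.

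For the per-round hypothesis count (c), feed the $m$-point sample into \textsc{select} with confidence parameter $\delta_0 = \delta/(4Km(2+\log(1/\rho)))$. By Theorem~\ref{theorem: selection procedure theorem}, \textsc{select} returns a $(\rho/8)$-average splitter while sampling
\[ O\!\left( \Bigl( \tfrac{1}{\epsilon \rho^2} + \tfrac{\log(1/\rho)}{\Phi(V_t)^2} \Bigr) \log\tfrac{1}{\delta_0}\right) \]
hypotheses. Because the algorithm has not yet terminated, Lemma~\ref{lemma: correct termination lemma}(b) contrapositively tells us that, except on an event of probability $\leq K e^{-\epsilon n/48}$, $\Phi(V_t)>\epsilon/2$; folding this into the failure budget lets us replace $\Phi(V_t)^2$ by $\epsilon^2/4$ in the bound above, yielding (c) after substituting $\delta_0$ and simplifying the log factors.

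Finally, correctness of the returned version space follows from Lemma~\ref{lemma: correct termination lemma}: with the chosen $n$, part (a) shows that the halting test almost never fires while $\Phi(V_t)>\epsilon$, and part (b) shows it almost always fires once $\Phi(V_t)\leq \epsilon/2$, so on the good event the algorithm halts at some $V$ with $\Phi(V)\leq\epsilon$. The main subtlety, and the place where the argument requires care, is the accounting: all three invocations (the $\geq\tau$ event per round, the \textsc{select} failure probability per round, and the two termination events) must be combined into a single $\delta$ budget, which is why the $\delta_0$ passed to \textsc{select} shrinks by a factor of $Km(2+\log(1/\rho))$ and why the $n$ in (c) carries the $\log(mK/\delta)+\log\log(1/\epsilon)$ factor rather than just $\log(1/\delta)$. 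Everything else is routine bookkeeping.
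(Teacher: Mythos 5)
Your proposal is correct and follows essentially the same route as the paper's proof: the same potential argument on $\pi(V)^2\Phi(V)$ with $\pi(V_t)\geq\pi(h^*)$ for the round count, the same $\tau$-based union bound for the unlabeled sample size, and the same combination of Theorem~\ref{theorem: selection procedure theorem} with the termination guarantee of Lemma~\ref{lemma: correct termination lemma} (using $\Phi(V_t)>\epsilon/2$ before termination to bound the per-round hypothesis count). The only differences are cosmetic choices in how the $\delta$ budget is split across the three failure events.
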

\begin{proof}
From definition of the average splitting index, if we draw $m = \frac{1}{\tau} \log  \frac{2K}{\delta}$ unlabeled points per round, then with probability $1-\delta/2$, each of the first $K$ rounds will have at least one data point that $\rho$-average splits the current version space. In each such round, if the version space has average diameter at least $\epsilon/2$, then with probability $1-\delta/4$ \textsc{select} will return a data point that $\rho/8$-average splits the current version space while sampling no more than $n = O\left( \left(\frac{1}{\epsilon \rho^2} + \frac{1}{\epsilon^2} \log \frac{1}{\rho} \right) \log \frac{mK \log\frac{1}{\epsilon}}{\delta} \right)$ hypotheses per round by Theorem~\ref{theorem: selection procedure theorem}.

By Lemma~\ref{lemma: correct termination lemma}, if the termination check uses $n'= O\left( \frac{1}{\epsilon}\log \frac{1}{\delta} \right)$ hypotheses per round, then with probability $1-\delta/4$ in the first $K$ rounds the termination condition will never be true when the current version space has average diameter greater than $\epsilon$ and will certainly be true if the current version space has diameter less than $\epsilon/2$.

Thus it suffices to bound the number of rounds in which we can $\rho/8$-average split the version space before encountering a version space with $\epsilon/2$. 

Since the version space is always consistent with the true hypothesis $h^*$, we will always have $\pi(V_t) \geq \pi(h^*)$. After $K = \frac{8}{\rho} \left( \log \frac{2}{\epsilon} + 2 \log \frac{1}{\pi(h^*)} \right)$ rounds of $\rho/8$-average splitting, we have
\begin{align*}
\pi(h^*)^2 \Phi(V_K) \ \leq \ \pi(V_K)^2\Phi(V_K)  \ \leq \ \left(1- \frac{\rho}{8} \right)^K \pi(V_0)^2 \Phi(V_0) \ \leq \ \frac{\pi(h^*)^2\epsilon}{2}
\end{align*}
Where we have used the fact that $\pi(V)^2 \Phi(V) \leq 1$ for any set $V \subset \HH$. Thus in the first $K$ rounds, we must terminate with a version space with average diameter less than $\epsilon$.
\end{proof}

\section{Proof of Lemma \ref{lemma: splitting index implies average splitting index lemma}}
\label{section: Proof}
In this section, we give the proof of the following relationship between the original splitting index and our average splitting index.
\SplittingImpliesAverageSplittingLemma*
The first step in proving Lemma~\ref{lemma: splitting index implies average splitting index lemma} is to relate the splitting index to our estimator $\psi(\cdot)$. Intuitively, splittability says that for any set of large edges there are many data points which remove a significant fraction of them. One may suspect this should imply that if a set of edges is large on average, then there should be many data points which remove a significant fraction of their weight. The following lemma confirms this suspicion.

\begin{lemma}
\label{lemma: splitting index implies finite average splitting index lemma}
Suppose that $V \subset \HH$ has splitting index $(\rho, \epsilon, \tau)$, and say $E = (\{h_1, h_1' \}, \ldots, \{ h_n, h_n' \})$ is a sequence of hypothesis pairs from $V$ satisfying $\frac{1}{n} \psi(E) > 2 \epsilon$. Then if $x \sim \D$, we have with probability at least $\tau$, 
\[ \max \left\{ \psi(E_x^+), \psi(E_x^-) \right\} \leq \left(1 - \frac{\rho}{4 \lceil \log(1/\epsilon) \rceil}\right)\psi(E) .\]
\end{lemma}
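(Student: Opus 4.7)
The plan is a two-step reduction from the combinatorial splittability of long edges (as guaranteed by the splitting index) to a weighted splittability of the whole sequence $E$. The first, easy step is to get rid of short edges. Since $\tfrac{1}{n}\psi(E) > 2\epsilon$, the edges of length at most $\epsilon$ contribute at most $n\epsilon < \tfrac{1}{2}\psi(E)$ to the total $\psi$-weight, so the long sub-sequence $E_\epsilon = \{e \in E : d(e) > \epsilon\}$ already satisfies $\psi(E_\epsilon) > \psi(E)/2$. Thus it is enough to remove an $\Omega(\rho/\log(1/\epsilon))$ fraction of $\psi(E_\epsilon)$ from each of $E_x^+$ and $E_x^-$ with probability $\tau$.

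The second and main step is a dyadic bucketing of $E_\epsilon$. Partition it into $L := \lceil \log(1/\epsilon) \rceil$ length-buckets
\[ B_k \;=\; \{e \in E_\epsilon : 2^{-(k+1)} < d(e) \le 2^{-k}\}, \qquad k = 0, 1, \ldots, L-1, \]
and use pigeonhole to pick an index $k^*$ with $\psi(B_{k^*}) \ge \psi(E)/(2L)$. Every edge in $B_{k^*}$ has length $> \epsilon$, so applying the splitting-index hypothesis directly to the edge sequence $B_{k^*} \subseteq \binom{V}{2}$ yields, with probability at least $\tau$ over $x \sim \D$, the count bound
\[ \max\bigl\{ |(B_{k^*})_x^+|,\; |(B_{k^*})_x^-| \bigr\} \;\le\; (1-\rho)|B_{k^*}|. \]

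The remaining work, which I expect to be the delicate (though short) part, is converting this counting inequality into a weighted one; this is where both the constant $4$ and the $\log(1/\epsilon)$ factor in the lemma come from. Because all edges in $B_{k^*}$ have lengths within a factor of $2$, one has $|B_{k^*}| \ge 2^{k^*}\psi(B_{k^*})$, while any edge removed from, say, $(B_{k^*})_x^+$ has length $> 2^{-(k^*+1)}$. The at least $\rho|B_{k^*}|$ edges removed on the $+$ side therefore account for weight at least
\[ \rho\,|B_{k^*}| \cdot 2^{-(k^*+1)} \;\ge\; \tfrac{\rho}{2}\,\psi(B_{k^*}) \;\ge\; \frac{\rho}{4L}\,\psi(E), \]
and the same bound holds for the $-$ side. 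Finally, since $\psi(E_x^+) = \psi((B_{k^*})_x^+) + \psi(E_x^+ \setminus B_{k^*}) \le \psi(E) - \bigl[\psi(B_{k^*}) - \psi((B_{k^*})_x^+)\bigr]$ and analogously for $E_x^-$, we conclude
\[ \max\bigl\{ \psi(E_x^+),\, \psi(E_x^-) \bigr\} \;\le\; \Bigl( 1 - \frac{\rho}{4\lceil \log(1/\epsilon) \rceil} \Bigr)\,\psi(E), \]
which is exactly the claim. The main obstacle is choosing the bucketing granularity so that the intra-bucket length spread is tight enough to make the count-to-weight conversion lose only a constant factor; any coarser bucketing loses too much weight, and any finer one blows up the $\log$ factor.
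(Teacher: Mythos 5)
Your proof is correct and follows essentially the same route as the paper's: discard the short edges (which carry less than half of $\psi(E)$ by the hypothesis $\tfrac{1}{n}\psi(E) > 2\epsilon$), bucket the long edges dyadically by length, pigeonhole a bucket carrying a $\tfrac{1}{2L}$ fraction of the weight, apply the splitting index to that single bucket, and convert the count bound to a weight bound losing only a factor of $2$ from the intra-bucket spread. The only differences from the paper are cosmetic (your buckets are $(2^{-(k+1)}, 2^{-k}]$ rather than $[2^{k-1}\epsilon, 2^k\epsilon)$, and you pigeonhole on $\psi(B_k)$ directly rather than on the upper bound $\sum_k 2^k\epsilon|E_k|$).
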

\begin{proof}
Consider partitioning $E$ as
\begin{align*}
E_0 &= \{ \{ h, h' \} \in E \, : \, d(h,h') < \epsilon \} \text{ and } \\
E_k &= \{ \{ h, h' \} \in E \, : \, d(h,h') \in [2^{k-1} \epsilon, 2^k \epsilon)
\end{align*}
for $k=1,\ldots, K$ with $K =  \lceil \log \frac{1}{\epsilon} \rceil$. Then $E_0, \ldots, E_K$ are all disjoint and their union is $E$. Define $E_{1:K} = \cup_{k=1}^K E_k$. 

We first claim that $\psi(E_{1:K}) > \psi(E_0)$. This follows from the observation that because $\psi(E) \geq 2 n  \epsilon$ and each edge in $E_0$ has length less than $\epsilon$, we must have
\[ \psi(E_{1:K}) \ = \ \psi(E) - \psi(E_0) \ > \ 2 n \epsilon - n \epsilon \ > \ \psi(E_0). \]
Next, observe that because each edge $\{ h, h' \} \in E_k$ with $k \geq 1$ satisfies $d(h,h') \in [2^{k-1}\epsilon, 2^k \epsilon)$, we have
\begin{align*}
\psi(E_{1:K})  \ = \ \sum_{k=1}^K \sum_{\{ h , h' \} \in E_k}  d(h,h') \ \leq \ \sum_{k=1}^K  2^{k} \epsilon |E_k|.
\end{align*}
Since there are only $K$ summands on the right, at least one of these must be larger than $\psi(E_{1:K})/K$. Let $k$ denote that index and let $x$ be a point which $\rho$-splits $E_k$. Then we have
\begin{align*}
\psi((E_{1:K})^+_x) \ &\leq \ \psi(E_{1:K}) - \psi(E_k \setminus (E_k)_x^+) \\ 
&\leq \ \psi(E_{1:K}) - \rho 2^{k-1} \epsilon |E_k| \\
&\leq \ \left( 1 - \frac{\rho}{2K} \right) \psi(E_{1:K}). 
\end{align*}
Since $\psi(E_{1:K}) \geq \psi(E_0)$, we have
\begin{align*}
\psi(E^+_x) \ \leq \ \psi(E_0) + \left( 1 - \frac{\rho}{2K} \right) \psi(E_{1:K}) \ \leq \ \left( 1 - \frac{\rho}{4K} \right) \psi(E).
\end{align*}
Symmetric arguments show the same holds for $E^-_{x}$. 

Finally, by the definition of splitting, the probability of drawing a point $x$ which $\rho$-splits $E_k$ is at least $\tau$, giving us the lemma.
\end{proof}

With Lemma~\ref{lemma: splitting index implies finite average splitting index lemma} in hand, we are now ready to prove Lemma~\ref{lemma: splitting index implies average splitting index lemma}.
\begin{figure}[b]
\centering
\includegraphics[scale=.75]{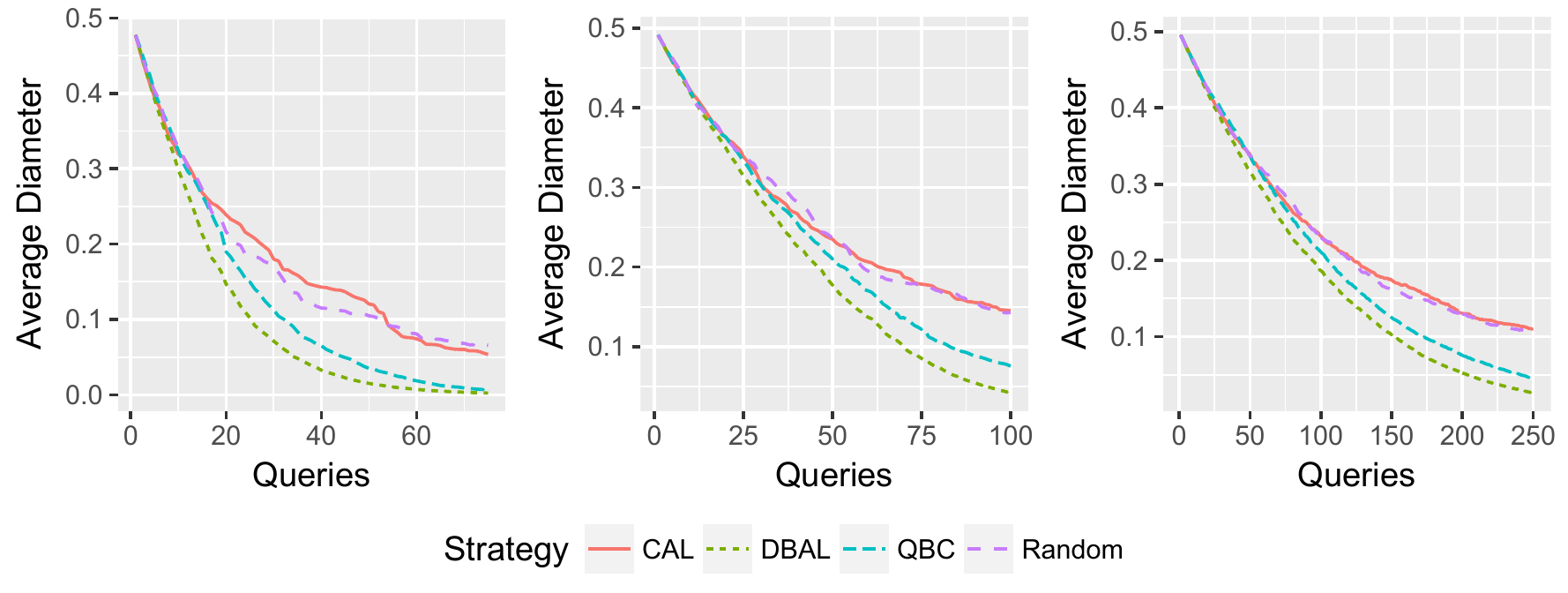}
\caption{Simulation results on homogeneous linear separators. \textit{Left}: $d = 10$. \textit{Middle}: $d = 25$.   \textit{Right}: $d = 50$. \label{figure: homogeneous linear separators figure}}
\end{figure}
\begin{proof}[Proof of Lemma~\ref{lemma: splitting index implies average splitting index lemma}]
Let $V \subset \HH$ such that $\Phi(V) > 2\epsilon$. Suppose that we draw $n$ edges $E$ i.i.d. from $\pi|_V$ and draw a data point $x \sim \D$. Then Hoeffding's inequality \cite{hoeffding-inequality}, combined with Lemma~\ref{lemma: expectation lemma}, tells us that there exist sequences $\epsilon_n, \delta_n \searrow 0$ such that with probability at least $1-3\delta_n$, the following hold simultaneously:
\begin{itemize}
	\item $\Phi(V) - \epsilon_n \ \leq \ \frac{1}{n}\psi(E) \ \leq \ \Phi(V) + \epsilon_n$,
	\item $\frac{1}{n} \psi(E_x^+) \ \geq \ \frac{\pi(V_x^+)^2}{\pi(V)^2}\Phi(V_x^+) - \epsilon_n$, and
	\item $\frac{1}{n} \psi(E_x^-) \ \geq \ \frac{\pi(V_x^-)^2}{\pi(V)^2}\Phi(V_x^-) - \epsilon_n$.
\end{itemize}
For $\epsilon_n$ small enough, we have that $\Phi(V) - \epsilon_n > 2\epsilon$. Combining the above with Lemma~\ref{lemma: splitting index implies finite average splitting index lemma}, we have with probability at least $\tau - 3\delta_n$,
\begin{align*}
\max \left\{ \frac{\pi(V_x^+)^2}{\pi(V)^2}\Phi(V_x^+), \frac{\pi(V_x^-)^2}{\pi(V)^2}\Phi(V_x^-)  \right\} - \epsilon_n \ &\leq \ \frac{1}{n} \max \{ \psi(E_x^+), \psi(E_x^-)  \} \\
&\leq \ \left(1 - \frac{\rho}{4 \lceil \log(1/\epsilon) \rceil}\right) \frac{\psi(E)}{n} \\
&\leq \ \left(1 - \frac{\rho}{4 \lceil \log(1/\epsilon) \rceil}\right)(\Phi(V) + \epsilon_n).
\end{align*}
By taking $n \rightarrow \infty$, we have $\epsilon_n, \delta_n \searrow 0$, giving us the lemma.
\end{proof}
\section{Simulations} 
\label{section: Simulations}
We compared DBAL against the baseline passive learner as well as two other generic active learning strategies: CAL and QBC. CAL proceeds by randomly sampling a data point and querying it if its label cannot be inferred from previously queried data points. QBC uses a prior distribution $\pi$ and maintains a version space $V$. Given a randomly sampled data point $x$, QBC samples two hypotheses $h, h' \sim \pi|_V$ and queries $x$ if $h(x) \neq h'(x)$.

We tested on two hypothesis classes: homogeneous, or through-the-origin, linear separators and $k$-sparse monotone disjunctions. In each of our simulations, we drew our target $h^*$ from the prior distribution. After each query, we estimated the average diameter of the version space. We repeated each simulation several times and plotted the average performance of each algorithm.

\paragraph{Homogeneous linear separators} The class of $d$-dimensional homogeneous linear separators can be identified with elements of the $d$-dimensional unit sphere. That is, a hypothesis $h \in \mathcal{S}^{d-1}$ acts on a data point $x \in \R^d$ via the sign of their inner product:
\[ h(x) \ := \ \text{sign}(\langle h, x \rangle). \]
In our simulations, both the prior distribution and the data distribution are uniform over the unit sphere. Although there is no known method to exactly sample uniformly from the version space, Gilad-Bachrach et al.~\cite{gilad-bachrach-nt-QBC-real} demonstrated that using samples generated by the hit-and-run Markov chain works well in practice. We adopted this approach for our sampling tasks.

Figure~\ref{figure: homogeneous linear separators figure} shows the results of our simulations on homogeneous linear separators. 

\begin{figure}
\centering
\includegraphics[scale=.80]{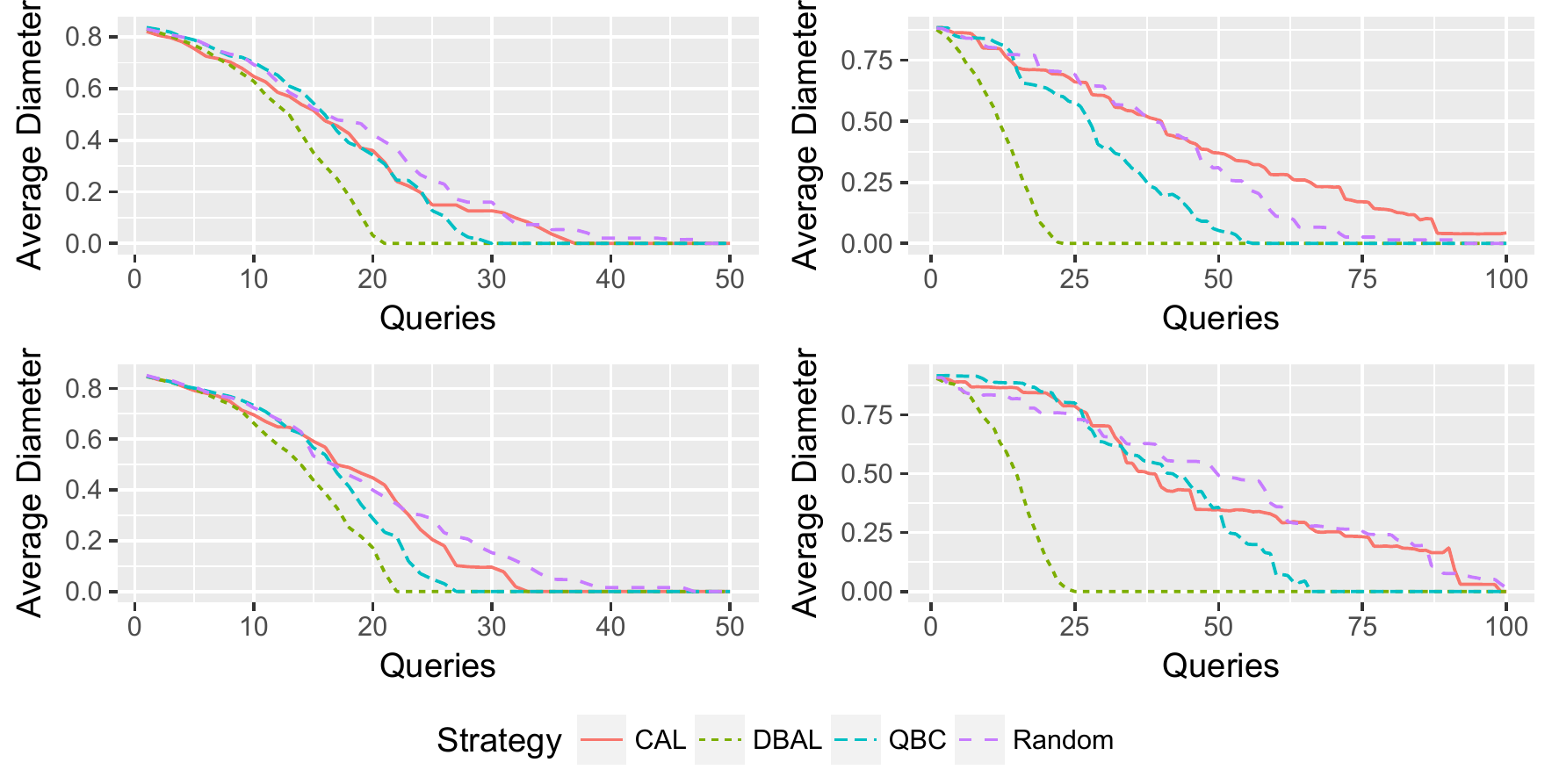}
\caption{Simulation results on $k$-sparse monotone disjunctions. In all cases $k = 4$. \textit{Top left}: $d = 75$, $p=0.25$.  \textit{Top right}: $d = 75$, $p=0.5$. \textit{Bottom left}: $d = 100$, $p=0.25$.  \textit{Bottom right}: $d = 100$, $p=0.5$. \label{figure: k sparse monotone disjunctions figure}}
\end{figure}

\paragraph{Sparse monotone disjunctions} A $k$-sparse monotone disjunction is a disjunction of $k$ positive literals. Given a Boolean vector $x \in \{0, 1\}^n$, a monotone disjunction $h$ classifies $x$ as positive if and only if $x_i = 1$ for some positive literal $i$ in $h$.

In our simulations, each data point is a vector whose coordinates are i.i.d. Bernoulli random variables with parameter $p$. The prior distribution is uniform over all $k$-sparse monotone disjunctions. When $k$ is constant, it is possible to sample from the prior restricted to the version space in expected polynomial time using rejection sampling.

The results of our simulations on $k$-sparse monotone disjunctions are in Figure~\ref{figure: k sparse monotone disjunctions figure}. 

\section*{Acknowledgments} 

The authors are grateful to the NSF for support under grants IIS-1162581 and DGE-1144086. Part of this work was done at the Simons Institute for Theoretical Computer Science, Berkeley, as part of a program on the foundations of machine learning. CT additionally thanks Daniel Hsu and Stefanos Poulis for helpful discussions.

\bibliography{references}
\bibliographystyle{plain}

\clearpage

\section*{Appendix: Proof Details}

\subsection*{Remark from Section~\ref{section: Preliminaries}}

In Section~\ref{section: Preliminaries}, the remark after the definition of average splitting stated that there exist hypothesis classes $V$ for which there are many points which $1/4$-split $E$ for any $E \subset {V \choose 2}$ but for which any $x \in \X$ satisfies 
\[ \max \{ \Phi(V_x^+), \Phi(V_x^-) \} \approx \Phi(V).\]
Here we formally prove this statement.

Consider the hypothesis class of homogeneous linear separators and let $V = \{e_1, \ldots, e_n\} \subset \HH$ where $e_k$ is the $k$-th unit coordinate vector. Let the data distribution be uniform over the $n$-sphere and the prior distribution $\pi$ be uniform over $V$. As a subset of the homogeneous linear separators, $V$ has splitting index $(1/4, \epsilon, \Theta(\epsilon))$ \cite[Theorem~10]{dasgupta-splitting-index}.

On the other hand, for any $i \neq j$, $d(h_i, h_j) = 1/2$. This implies that 
\[ \Phi(V) \ = \ \pr(h \neq h') \E_{h,h'}[d(h,h') \, | \, h\neq h'] \ = \ \frac{n-1}{2n}. \]
Moreover, any query $x \in \X$ eliminates at most half the hypotheses in $V$ in the worst case. Therefore, for all $x \in \X$,
$$\max \{ \Phi(V_x^+), \Phi(V_x^-) \} \ \geq \ \frac{(n/2 - 1)}{2 (n/2)} 
\ = \ \left(\frac{n-2}{n-1}\right) \Phi(V).$$

\subsection*{Proofs of Lemma~\ref{lemma: selection procedure lemma} and Lemma \ref{lemma: correct termination lemma}}

The proofs in this section rely crucially on two concentration inequalities. The first is due to Hoeffding~\cite{hoeffding-inequality}.

\begin{lemma}[Hoeffding 1963]
\label{lemma: Hoeffding bound lemma}
Let $X_1, \ldots, X_n$ be i.i.d. random variables taking values in $[0,1]$ and let $X = \sum X_i$ and $\mu = \E[X]$. Then for $t > 0$,
\[ \pr(X - \mu \geq t) \leq \exp \left( -\frac{2t^2}{n} \right) \]
\end{lemma}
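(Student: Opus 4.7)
The plan is to apply the classical Chernoff-bound / exponential Markov trick and reduce the problem to controlling the moment generating function of each bounded summand. First I would write, for any $s > 0$,
\[
\pr(X - \mu \geq t) \ = \ \pr(e^{s(X-\mu)} \geq e^{st}) \ \leq \ e^{-st}\, \E[e^{s(X-\mu)}],
\]
which is just Markov's inequality applied to the nonnegative random variable $e^{s(X-\mu)}$. By independence of $X_1,\ldots,X_n$, the expectation factors as $\E[e^{s(X-\mu)}] = \prod_{i=1}^n \E[e^{s(X_i - \E X_i)}]$, so the task reduces to bounding the MGF of a single centered, bounded variable.

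The heart of the argument is to invoke the standard auxiliary fact (often itself called Hoeffding's lemma): if $Y$ is a mean-zero random variable supported in an interval $[a,b]$, then $\E[e^{sY}] \leq \exp(s^2(b-a)^2/8)$. Since each $X_i - \E X_i$ is mean zero and lies in an interval of length at most $1$, this gives $\E[e^{s(X_i - \E X_i)}] \leq e^{s^2/8}$, and multiplying $n$ such factors yields $\E[e^{s(X-\mu)}] \leq e^{n s^2/8}$. Combining with the Markov step gives $\pr(X - \mu \geq t) \leq \exp(-st + n s^2/8)$, valid for every $s > 0$.

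To finish, I would optimize over $s$. Differentiating the exponent in $s$ and setting to zero yields the minimizer $s = 4t/n$, which when substituted back produces the advertised bound $\exp(-2t^2/n)$.

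The one nontrivial ingredient is the sub-Gaussian MGF estimate for a bounded mean-zero $Y \in [a,b]$, which is where essentially all of the work lives. The clean route is to use convexity: for $y \in [a,b]$, $e^{sy} \leq \tfrac{b-y}{b-a}e^{sa} + \tfrac{y-a}{b-a}e^{sb}$, so taking expectations and using $\E Y = 0$ gives $\E[e^{sY}] \leq \tfrac{b}{b-a}e^{sa} - \tfrac{a}{b-a}e^{sb}$. Writing this as $e^{\varphi(u)}$ with $u = s(b-a)$, one checks $\varphi(0) = \varphi'(0) = 0$ and $\varphi''(u) \leq 1/4$ uniformly, so Taylor's theorem yields $\varphi(u) \leq u^2/8$. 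I expect this convexity-plus-Taylor calculation to be the only real obstacle; everything else is bookkeeping.
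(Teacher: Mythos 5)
Your proof is correct and is the standard Chernoff--Hoeffding argument: exponential Markov, factorization by independence, the MGF bound $\E[e^{sY}]\le e^{s^2(b-a)^2/8}$ for a centered bounded variable via convexity and a second-order Taylor estimate, and optimization at $s=4t/n$, which indeed yields $\exp(-2t^2/n)$. The paper does not prove this lemma at all---it is stated as a classical result and attributed to Hoeffding (1963)---so there is nothing to compare against; your write-up is a complete and accurate reconstruction of the textbook proof.
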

Our other tool will be the following multiplicative Chernoff-Hoeffding bound due to Angluin and Valiant \cite{angluin-v-multiplicative-chernoff}.
\begin{lemma}[Angluin and Valiant 1977]
\label{lemma: multiplicative Chernoff bound lemma}
Let $X_1, \ldots, X_n$ be i.i.d. random variables taking values in $[0,1]$ and let $X = \sum X_i$ and $\mu = \E[X]$. Then for $0 < \beta < 1$,
\begin{itemize}
	\item[(i)] $\pr(X \leq (1 - \beta)\mu) \leq \exp\left(	- \frac{\beta^2 \mu}{2} \right)$ and
	\item[(ii)] $\pr(X \geq (1 + \beta)\mu) \leq \exp\left(	- \frac{\beta^2 \mu}{3} \right)$.
\end{itemize}
\end{lemma}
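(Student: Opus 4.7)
The plan is to prove both tail bounds by the classical Cramer–Chernoff (exponential moment) method, differing only in whether we exponentiate by a positive or negative parameter, and in which auxiliary numerical inequality we invoke at the end.

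The first step is to establish a uniform moment generating function bound for each summand. For any $X_i$ supported on $[0,1]$ with mean $\mu_i = \E[X_i]$ and any real $t$, the convexity of $x \mapsto e^{tx}$ on $[0,1]$ gives $e^{tx} \le 1 - x + x e^t$, so taking expectations yields $\E[e^{tX_i}] \le 1 + \mu_i(e^t - 1) \le \exp(\mu_i(e^t-1))$ using $1+u \le e^u$. By independence, $\E[e^{tX}] \le \exp(\mu(e^t - 1))$ where $\mu = \sum_i \mu_i$.

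For part (ii), I would fix $t > 0$ and apply Markov's inequality to $e^{tX}$: $\pr(X \ge (1+\beta)\mu) \le e^{-t(1+\beta)\mu}\,\E[e^{tX}] \le \exp\!\bigl(\mu(e^t - 1 - t(1+\beta))\bigr)$. Optimizing over $t$ by setting $t = \ln(1+\beta)$ yields the standard form $\left(\tfrac{e^\beta}{(1+\beta)^{1+\beta}}\right)^\mu$. It then suffices to verify the numerical inequality $(1+\beta)\ln(1+\beta) - \beta \ge \beta^2/3$ for $\beta \in (0,1)$, which follows from a one-variable calculus check: the difference vanishes at $\beta=0$ and its derivative $\ln(1+\beta) - 2\beta/3$ is nonnegative on $[0,1]$ (easy to verify by noting the derivative itself vanishes at $0$ and has derivative $\tfrac{1}{1+\beta}-\tfrac{2}{3} \ge 0$ for $\beta \le 1/2$, then comparing endpoint values on $[1/2,1]$). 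This delivers (ii).

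For part (i), I would run the symmetric argument with a negative exponential parameter. For $s > 0$, $\pr(X \le (1-\beta)\mu) = \pr(e^{-sX} \ge e^{-s(1-\beta)\mu}) \le e^{s(1-\beta)\mu}\,\E[e^{-sX}] \le \exp\!\bigl(\mu(e^{-s} - 1 + s(1-\beta))\bigr)$. Choosing $s = -\ln(1-\beta)$ gives $\left(\tfrac{e^{-\beta}}{(1-\beta)^{1-\beta}}\right)^\mu$. The proof concludes by verifying $(1-\beta)\ln(1-\beta) + \beta \ge \beta^2/2$ for $\beta \in (0,1)$; this again reduces to a Taylor-type check, e.g.\ using $\ln(1-\beta) = -\sum_{k\ge 1}\beta^k/k$ to write the left-hand side as $\sum_{k\ge 2}\bigl(\tfrac{1}{k-1}-\tfrac{1}{k}\bigr)\beta^k = \sum_{k\ge 2}\tfrac{\beta^k}{k(k-1)}$, whose $k=2$ term already equals $\beta^2/2$ and the rest is nonnegative.

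The main obstacle is really just the two scalar inequalities at the end: the exponential-moment setup and the optimization are mechanical, but a careless bound of $(1+\beta)\ln(1+\beta) - \beta$ from below would produce the weaker constant $1/(2(1+\beta))$ rather than the constant $1/3$ advertised in (ii). I would therefore spend the bulk of the write-up making those two one-variable bounds precise and would handle (i) and (ii) in parallel to emphasize that only the sign of the exponential parameter and the choice of scalar inequality differ.
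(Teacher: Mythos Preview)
The paper does not give a proof of this lemma; it is simply quoted as a classical concentration inequality and attributed to Angluin and Valiant. Your proposal is the standard Cram\'er--Chernoff argument (convexity bound on the MGF of a $[0,1]$-valued variable, Markov's inequality, optimization of the exponential parameter, followed by the two scalar inequalities $(1+\beta)\ln(1+\beta)-\beta\ge\beta^2/3$ and $(1-\beta)\ln(1-\beta)+\beta\ge\beta^2/2$), and it is correct as outlined.
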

We now turn to the proof of Lemma~\ref{lemma: selection procedure lemma}.
\SelectionProcedureLemma*
\begin{proof}
In round $t$, let $\widehat{\rho} := \widehat{\rho}_t$, $\widehat{\Phi} := \widehat{\Phi}_t$, $m := m_t$, and $n := n_t$. 

\medskip

For (a), recall $\widehat{\Phi} = \frac{1}{m} \psi(E')$ for $E' \sim (\pi|_V)^{2 \times m}$. By Lemma~\ref{lemma: multiplicative Chernoff bound lemma}, we have for $\beta_0 >0$ 
\[ \pr \left((1-\beta_0)\Phi(V) \leq \widehat{\Phi} \leq (1+\beta_0)\Phi(V) \right) \geq 1- 2\exp\left( - \frac{m \beta_0^2 \epsilon}{3} \right). \]
Taking $m \geq \frac{3}{\beta_0^2 \epsilon} \log\left( \frac{4}{\delta_0} \right)$, we have the above probability is at least $1 - \delta_0/2$. Let us condition on this event occurring.
\medskip

To see (b), say w.l.o.g. $\left( \frac{\pi(V_x^+)}{\pi(V)} \right)^2 \Phi(V_x^+) = (1-\rho)\Phi(V)$. Then, we have
\begin{align*}
\pr\left(\frac{1}{n}\psi(E_x^+) \leq (1-\widehat{\rho}) \widehat{\Phi} \right) &\leq \pr\left(\frac{1}{n}\psi(E_x^+) \leq (1-\widehat{\rho}) (1+ \beta_0) \Phi(V) \right).
\end{align*}
Taking $\beta$ such that $(1-\beta)(1-\rho) = (1-\widehat{\rho})(1+\beta_0)$, we have by Lemma~\ref{lemma: multiplicative Chernoff bound lemma}~(i),
\begin{align*}
\pr\left(\frac{1}{n}\psi(E_x^+) \leq (1-\widehat{\rho}) \widehat{\Phi} \right) &\leq \pr\left(\frac{1}{n}\psi(E_x^+) \leq (1-\beta) (1-\rho) \Phi(V) \right) \\
&\leq \exp\left( - \frac{n \beta^2(1-\rho) \Phi(V)}{2} \right)\\
&\leq \exp  \left( - \frac{n (1-\rho) \widehat{\Phi}}{2(1+\beta_0)} \cdot \left[ 1 - \frac{(1 - \widehat{\rho})(1+ \beta_0)}{1-\rho} \right]^2  \right) \\
&\leq \exp  \left( - \frac{n (1-\widehat{\rho}/2) \widehat{\Phi}}{2(1+\beta_0)} \cdot \left[ 1 - \frac{(1 - \widehat{\rho})(1+ \beta_0)}{1-\widehat{\rho}/2} \right]^2  \right).
\end{align*}
Taking $\beta_0 \leq \widehat{\rho}/4$, the above is less than $\exp  \left( - \frac{n \widehat{\Phi} \widehat{\rho}^2}{32}  \right).$ With $n$ as in the lemma statement and combined with our results on the concentration of $\widehat{\Phi}$, we have that with probability $1-\delta_0$ \[ \frac{1}{n}\max \left\{ \psi(E_x^+), \psi(E_x^-)\right\} > (1-\widehat{\rho}) \widehat{\Phi}. \]

To see (c),  suppose now that w.l.o.g. $\left( \frac{\pi(V_x^-)}{\pi(V)} \right)^2 \Phi(V_x^-)  \leq  \left( \frac{\pi(V_x^+)}{\pi(V)} \right)^2 \Phi(V_x^+) = (1-\rho)\Phi(V)$. We need to consider two cases.

\paragraph{Case 1: $\rho \leq 1/2$.} 
Taking $\beta$ such that $(1+\beta)(1-\rho) = (1-\widehat{\rho})(1 - \beta_0)$, we have by Lemma~\ref{lemma: multiplicative Chernoff bound lemma}~(ii),
\begin{align*}
\pr\left(\frac{1}{n}\psi(E_x^+) > (1-\widehat{\rho}) \widehat{\Phi} \right) &\leq \pr\left(\frac{1}{n}\psi(E_x^+) > (1-\widehat{\rho}) (1-\beta_0) \Phi(V) \right) \\
&= \pr\left(\frac{1}{n}\psi(E_x^+) > (1+\beta) (1-\rho) \Phi(V) \right) \\
&\leq \exp \left( - \frac{n \beta^2 (1- \rho)\Phi(V)}{3} \right) \\
&\leq \exp \left( - \frac{n (1- \rho)\widehat{\Phi}}{3(1+\beta_0)} \cdot \left[\frac{(1-\widehat{\rho})(1-\beta_0)}{1-\rho} -1\right]^2 \right) \\
&\leq \exp \left( - \frac{n \widehat{\Phi}}{6(1+\beta_0)} \cdot \left[\frac{(1-\widehat{\rho})(1-\beta_0)}{1- 2 \widehat{\rho}} -1 \right]^2 \right).
\end{align*}

Taking $\beta_0 \leq \widehat{\rho}/4$, the above is less than $\exp  \left( - \frac{n \widehat{\Phi} \widehat{\rho}^2}{12}  \right)$. Because $\left( \frac{\pi(V_x^-)}{\pi(V)} \right)^2 \Phi(V_x^-)  \leq  \left( \frac{\pi(V_x^+)}{\pi(V)} \right)^2 \Phi(V_x^+)$, we also can say 
\[ \pr\left(\frac{1}{n}\psi(E_x^-) > (1-\widehat{\rho}) \widehat{\Phi} \right) \leq \exp  \left( - \frac{n \widehat{\Phi} \widehat{\rho}^2}{12}  \right). \]

\paragraph{Case 2:  $\rho > 1/2$.} Taking $\beta_0 \leq 1/16$, we have 
\begin{align*}
\pr\left(\frac{1}{n} \psi(E_x^+) > (1- \widehat{\rho})\widehat{\Phi} \right) 
&\leq \pr\left(\frac{1}{n} \psi(E_x^+) > (1- \widehat{\rho})(1-\beta_0)\Phi(V) \right) \\
&= \pr\left(\frac{1}{n} \psi(E_x^+) > (1-\rho)\Phi(V) +  ((1- \widehat{\rho})(1-\beta_0) - (1-\rho)) \Phi(V) \right) \\
&\leq \pr\left(\frac{1}{n} \psi(E_x^+) > (1-\rho)\Phi(V) +  (\rho - \widehat{\rho} - \beta_0 ) \Phi(V) \right) \\
&\leq \pr\left(\frac{1}{n} \psi(E_x^+) > (1-\rho)\Phi(V) +  \left(\frac{\rho}{2} - \beta_0 \right) \Phi(V) \right) \\
&\leq \pr\left(\frac{1}{n} \psi(E_x^+) > (1-\rho)\Phi(V) +  \left( \frac{1}{4} - \beta_0 \right) \Phi(V) \right) \\
&\leq \pr\left(\frac{1}{n} \psi(E_x^+) > (1-\rho)\Phi(V) +  \frac{\frac{1}{4} - \beta_0}{1 + \beta_0}  \widehat{\Phi}\right) \\
&\leq \pr\left(\frac{1}{n} \psi(E_x^+) > (1-\rho)\Phi(V) +  \frac{3}{17}  \widehat{\Phi}\right) 
\end{align*}
By Lemma~\ref{lemma: Hoeffding bound lemma}, the above is less than $\exp\left(- \frac{n \widehat{\Phi}^2}{40}  \right)$. Because $\left( \frac{\pi(V_x^-)}{\pi(V)} \right)^2 \Phi(V_x^-)  \leq  \left( \frac{\pi(V_x^+)}{\pi(V)} \right)^2 \Phi(V_x^+)$, we also can say 
\[ \pr\left(\frac{1}{n}\psi(E_x^-) > (1-\widehat{\rho}) \widehat{\Phi} \right) \leq \exp\left(- \frac{n \widehat{\Phi}^2}{40}  \right). \]

Regardless of which case we are in, we have for $n$ as in the lemma statement, with probability $1- \delta_0$, \[ \frac{1}{n}\max \left\{ \psi(E_x^+), \psi(E_x^-)\right\} \leq (1- \widehat{\rho}) \widehat{\Phi}. \qedhere \]
\end{proof}

We next provide the proof of Lemma~\ref{lemma: correct termination lemma}.
\CorrectTerminationLemma*
\begin{proof}
Recall that the termination condition from DBAL is $\frac{1}{n} \psi(E) < \frac{3\epsilon}{4}$ for $E \sim (\pi|_V)^{2\times n}$.

Part (a) follows from plugging in $\beta = \frac{1}{4}$ into Lemma~\ref{lemma: multiplicative Chernoff bound lemma}~(i) and taking a union bound over rounds $1, \ldots, K$.

Similarly, part (b) follows from plugging in $\beta = \frac{1}{4}$ into Lemma~\ref{lemma: multiplicative Chernoff bound lemma}~(ii) and taking a union bound over rounds $1, \ldots, K$.
\end{proof}

\end{document}